\documentclass{article}

\usepackage[preprint]{neurips_2026}

\usepackage[utf8]{inputenc} 
\usepackage[T1]{fontenc}    
\usepackage{hyperref}       
\usepackage{url}            
\usepackage{booktabs}       
\usepackage{amsfonts}       
\usepackage{nicefrac}       
\usepackage{microtype}      
\usepackage{xcolor}         

\usepackage{graphicx}
\usepackage{amssymb}
\usepackage{amsmath}
\usepackage{amsthm}
\usepackage{algorithm}
\newtheorem{lemma}{Lemma}
\newtheorem{proposition}{Proposition}

\title{PIMSM: Physics-Informed Multi-Scale Mamba for Stable Neural Representations under Distribution Shift}

\author{%
  Sangyoon Bae \\
  Interdisciplinary Program in Artificial Intelligence \\
  Seoul National University \\
  Seoul, South Korea, 08826 \\
  \texttt{stellasybae@snu.ac.kr} \\
  \And
  Shinjae Yoo \\
  Computational Science Initiative \\
  Brookhaven National Laboratory \\
  Shirley, New York, United States, 11967 \\
  \texttt{sjyoo@bnl.gov} \\
  \And
  Jiook Cha \\
  Department of Psychology \\
  Seoul National University \\
  Seoul, South Korea, 08826 \\
  \texttt{connectome@snu.ac.kr} \\
}

\begin{document}

\maketitle

\begin{abstract}
Scientific foundation models are expected to reuse representations under changes in dataset, acquisition protocol, and deployment domain, yet many sequence backbones treat scientific temporal structure as an unconstrained pattern to be fitted. We argue that this misses a central property of natural dynamical systems: neural and atmospheric time series are organized by interacting processes across multiple physical timescales, and failure to preserve this multiscale structure contributes to brittleness under distribution shift. We formalize this failure mode as \emph{temporal kernel mismatch}, where a model fits in-distribution dynamics with an effective memory policy that is not anchored to the signal's physical timescales, leading to representation drift and degraded transfer. We propose \textbf{Physics-Informed Multi-Scale Mamba (PIMSM)}, a state-space architecture that maps spectrum-estimated transition points between frequency regimes (knee frequencies) to scale-specific discretization parameters and anchors them to acquisition time units. On Human Connectome Project fMRI, PIMSM improves robustness and representation stability under severe temporal-context truncation, extreme low-resource transfer, and resting-state to task-state generalization. Without modality-specific adaptation, the same architecture also attains the lowest variable-wise MAE across all reported horizons and variables on Weather-5K held-out-station spatial out-of-distribution forecasting. These results support temporal-scale alignment as a practical inductive bias for scientific foundation models that must preserve structure, not only fit correlations, under deployment shift.
\end{abstract}

\section{Introduction}
\label{sec:introduction}

Scientific foundation models promise reusable representations across datasets, acquisition protocols, and downstream tasks. For multiscale scientific time series, however, reuse depends on learning the underlying temporal structure, not only on fitting correlations in a large training distribution. Our goal is therefore not only to improve benchmark accuracy, but to encode a physical intuition shared by neural and atmospheric signals: their predictive structure is organized across interacting timescales. Neural recordings exhibit scale-free dynamics that can require more than a single scaling exponent, including multifractal organization in fMRI~\citep{he2011scale,ciuciu2012scale,bae2025spatiotemporal}, and stratified atmospheric turbulence exhibits distinct spectral regimes across physical scales~\citep{cheng2020model}. If a sequence backbone treats these dynamics as unconstrained patterns rather than physically organized multiscale processes, it can fit in-distribution data while learning a memory policy that is misaligned with the signal's relevant timescales.

This mismatch becomes most visible under deployment shift. Temporal resolution, sampling rate, scanner/site, brain state, season, station, and available labels can change between training and use, while many foundation-model claims are still driven by in-distribution or narrowly shifted evaluations. Recent brain foundation models~\citep{tak2026generalizable,wang2025towards,wang2025slim,wang2026omni} and time-series forecasting foundation models~\citep{abhimanyu2024decoder,ansari2024chronos,woo2024unified} show the value of large-scale representation learning and zero-/low-data adaptation.
However, this development path mainly scales data, parameters, or adaptation mechanisms; it rarely asks whether the backbone itself preserves the physical temporal structure that should remain meaningful across shifts. We therefore ask a complementary question: what architectural constraint helps a scientific foundation model preserve the multiscale structure that makes its representation reusable under shift?

We frame this failure mode as \textbf{temporal kernel mismatch}. It arises when an unconstrained sequence backbone learns a poorly constrained or dominant effective memory policy over the whole time series: the model can fit the average temporal statistics of an in-distribution split while remaining poorly anchored to physical timescales. When temporal context, acquisition condition, brain state, or deployment domain changes, this policy may no longer preserve task-relevant dynamics, producing representation drift and degraded transfer. This motivates architectures that expose and constrain temporal scale, rather than leaving all scale selection to unconstrained end-to-end training.

Following recent frequency-resolved fMRI modeling of multifractal structure~\citep{bae2025spatiotemporal}, we fit a piecewise power-law form to the spectrum:
\begin{equation}
P(f) \propto f^{-\beta_k},
\quad f \in [f_{k-1}, f_k],
\quad k = 1, \dots, K,
\label{eq:piecewise_powerlaw}
\end{equation}
where knee frequencies $f_k$ are spectral transition points that separate temporal regimes. In this work, the piecewise spectral model is used to extract those knees and translate them into scale-specific temporal parameters in the time-domain SSM, turning multi-scale dynamics into observable target timescales rather than a qualitative statement that the signal is ``complex.''

To explicitly model these regimes in the time domain, a sequence backbone needs parameters that can be tied to physical time units. Mamba-style selective SSMs appear to offer such a handle through the discretization step $\Delta$, which affects memory and update speed~\citep{gu2023mamba}. However, $\Delta$ is not a physical sampling interval by itself: the induced kernel depends jointly on $\Delta$ and $A$ (e.g., through $\exp(\Delta A)$), so the two can compensate for each other. Without additional structure, an SSM can fit an in-distribution kernel while providing little control over how its effective timescales relate to acquisition units or spectral regimes.
Unlike generic hyperparameter tuning, acquisition-unit anchoring makes $\Delta$ part of the data model: the learned update rates are expressed in TRs or hours, the same units in which the signal is sampled and deployed. This makes temporal scale selection interpretable and more likely to transfer across context, state, and domain shifts.

Based on this observation, we introduce \textbf{Physics-Informed Multi-Scale Mamba (PIMSM)}. PIMSM reframes temporal-scale alignment as an architectural requirement: it replaces a fully unconstrained temporal parameterization with a spectrum-guided hierarchy of kernels whose scales are anchored to acquisition time. Rather than treating multi-scale structure as an external preprocessing step or a tunable performance trick, PIMSM uses the observed spectral hierarchy to parameterize the state-space dynamics.

Our central hypothesis is that kernels aligned to multiscale organization capture more reusable dynamics, reducing representation drift and performance degradation under shift. We test this across shortened neural observation windows, scarce task labels, brain-state transfer, and geographically held-out weather stations, deliberately changing both modality and deployment context. On HCP, PIMSM improves robustness and representation stability over parameter-matched single-scale baselines; on Weather-5K, it gives consistent variable-wise gains across forecasting horizons.

Our main contributions are as follows:

\begin{itemize}
\item \textbf{Temporal-scale alignment as a practical foundation-model requirement.}
We formulate temporal kernel mismatch as a diagnostic for when scientific time-series representations fail to preserve relevant physical timescales under acquisition, context, or domain shift.
\item \textbf{Physics-informed multi-scale temporal parameterization.}
We propose \textbf{Physics-Informed Multi-Scale Mamba (PIMSM)}, a structured multi-scale state-space architecture that derives scale-specific temporal kernels from spectrum-estimated knee frequencies, expresses $\Delta$ in acquisition time units, and regularizes the $A$ scale to reduce $\Delta$--$A$ compensation.
\item \textbf{Multi-axis robustness evaluation across neural and physical systems.}
We evaluate PIMSM under diverse structured shifts across neural and meteorological time series, showing that physics-informed multi-scale temporal structure improves robustness without shift-specific adaptation.
\end{itemize}

\section{Problem Setup}
\label{sec:problem}

Let $e \in \mathcal{E}$ index an acquisition or domain condition (fMRI TR, brain state, scanner/site, meteorological station regime, etc.).
A structured shift $e \to e'$ alters the temporal statistics of $x_{1:T} \in \mathbb{R}^d$ while preserving the downstream task definition.
Thus the labels or forecasting targets are fixed, but the temporal evidence, supervision budget, state, or station domain available at deployment changes.
We evaluate four axes: \textbf{Axis (i)} temporal-context truncation of task motor blocks with full-run spectral calibration; \textbf{Axis (ii)} low-resource transfer; \textbf{Axis (iii)} brain-state shift (resting $\to$ task); and \textbf{Axis (iv)} Weather-5K spatial out-of-distribution (OOD) forecasting on held-out stations.

The central object in this setting is the temporal weighting profile induced by the encoder.
Given a multivariate time series $x_{1:T}$, a temporal encoder $\Phi_{\theta}$ induces condition-dependent latent states
\begin{equation}
h_{1:T}^{(e)}=\Phi_{\theta}^{\tilde g_e}(x_{1:T}),
\qquad
z_e(x)=\rho(h_{1:T}^{(e)}),
\label{eq:encoder_kernel_setup}
\end{equation}
where $\tilde g_e$ denotes the model-induced temporal weighting profile under condition $e$, and $\rho$ aggregates latent states into a representation.
A task head $g_{\psi}$ predicts $\hat y=g_{\psi}(z_e(x))$.
Under shift, the question is whether the architecture preserves a useful temporal weighting profile and representation geometry when the condition changes to $e'$, without shift-specific retraining or adaptation.
We quantify representation drift between conditions via a geometry-aware discrepancy $\mathcal{D}$, instantiated with CKA~\citep{kornblith2019similarity} or distance correlation dCor~\citep{szekely2007measuring}:
\begin{equation}
\mathrm{Drift}(e,e') := \mathbb{E}_{x}\!\left[\mathcal{D}\!\left(z_e(x),\, z_{e'}(x)\right)\right],
\label{eq:drift_def}
\end{equation}
and report both downstream performance and $\mathrm{Drift}(e,e')$ for each shift pair.
For Weather-5K, the same setup is used with a regression head and variable-wise MAE; for HCP, $g_{\psi}$ is a classifier and $z$ is evaluated through accuracy and representation-stability metrics.

\section{Theory}
\label{sec:theory}

A single effective timescale defines a constrained temporal weighting function: faster decay emphasizes recent samples and passes more high-frequency variation, whereas slower decay averages over longer lags and preserves context at the cost of lag or missed rapid changes~\citep{oppenheim1999discrete,brown1959statistical}.
In an SSM, this weighting function is the temporal kernel, whose decay profile specifies the contribution of each past lag to the current representation.
When TR, context length, brain state, or station regime changes, the task-relevant lag structure can change; a model whose induced kernel remains poorly aligned to that structure will produce shifted latent states.
We therefore formalize the memory-alignment problem as temporal-kernel mismatch and analyze how kernel error propagates to representation and prediction error.
PIMSM addresses this by maintaining multiple spectrum-derived kernels rather than forcing one dominant timescale to serve all regimes.

\subsection{Memory as Kernel Matching}
\label{sec:theory:kernelbound}

Many SSM-based encoders admit a locally linear convolutional view~\citep{gu2021efficiently,gu2023mamba}:
\begin{equation}
h(t) \;=\; \int_{0}^{\infty} g(\tau)\, x(t-\tau)\, d\tau,
\qquad
\tilde h(t) \;=\; \int_{0}^{\infty} \tilde g(\tau)\, x(t-\tau)\, d\tau,
\label{eq:kernel_conv}
\end{equation}
where $g$ is the \emph{true} effective kernel under a given condition (or the kernel required to preserve task-relevant dynamics), and $\tilde g$ is the kernel induced/approximated by the model.

\begin{lemma}[Kernel mismatch bound]
\label{lem:kernel_mismatch}
Assume $\|x\|_{\infty}\le M$.
Let $h$ and $\tilde h$ be defined as in Eq.~\eqref{eq:kernel_conv}.
Then,
\begin{equation}
\|h - \tilde h\|_{\infty} \;\le\; M\, \|g - \tilde g\|_{1}.
\label{eq:kernel_mismatch_bound}
\end{equation}
\end{lemma}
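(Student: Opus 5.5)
The argument is a direct application of linearity of the convolution in the kernel, followed by an $L^\infty$--$L^1$ (Young/Hölder-type) estimate. We fix an arbitrary time $t$ and subtract the two representations in Eq.~\eqref{eq:kernel_conv}. Since both integrals are taken against the same input $x(t-\cdot)$, the difference collapses to a single integral:
\begin{equation*}
h(t)-\tilde h(t) \;=\; \int_{0}^{\infty} \bigl(g(\tau)-\tilde g(\tau)\bigr)\, x(t-\tau)\, d\tau .
\end{equation*}
To justify splitting the difference of integrals into one integral, we need both integrals to be finite. We therefore assume, implicitly in the statement, that $g-\tilde g\in L^1([0,\infty))$; otherwise the right-hand side of Eq.~\eqref{eq:kernel_mismatch_bound} is $+\infty$ and there is nothing to prove. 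We also assume that $g$ and $\tilde g$ are individually integrable, so that $h$ and $\tilde h$ are well defined. This is the natural setting for SSM kernels of the form $C e^{\tau A} B$ with stable $A$, which decay exponentially.

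Next, we move the absolute value inside the integral and use the pointwise bound $|x(t-\tau)|\le \|x\|_\infty\le M$. This gives
\begin{equation*}
|h(t)-\tilde h(t)| \;\le\; \int_{0}^{\infty} |g(\tau)-\tilde g(\tau)|\,|x(t-\tau)|\, d\tau \;\le\; M \int_{0}^{\infty} |g(\tau)-\tilde g(\tau)|\, d\tau \;=\; M\,\|g-\tilde g\|_{1}.
\end{equation*}
The bound does not depend on $t$, so taking the supremum (or essential supremum) over $t$ yields Eq.~\eqref{eq:kernel_mismatch_bound}.

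If $x$ takes values in $\mathbb{R}^d$ and the kernels are matrix-valued, the same chain goes through componentwise. In that case we interpret $|\cdot|$ as a vector norm and the kernel difference in the corresponding induced operator norm, and we use $\|(g-\tilde g)(\tau)\,x\|\le \|(g-\tilde g)(\tau)\|\,\|x\|$ inside the integral. The result is the same inequality, with $\|g-\tilde g\|_1=\int_0^\infty \|(g-\tilde g)(\tau)\|\,d\tau$.

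There is no real obstacle; the only point needing care is the integrability and measurability bookkeeping. We have to make sure the integrand is measurable and absolutely integrable, so that the triangle inequality for integrals applies, and that the supremum is taken in the essential sense if $x$ is only defined almost everywhere. Both follow from $g-\tilde g\in L^1$ and $x\in L^\infty$.
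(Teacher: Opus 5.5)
Your proof is correct and follows the paper's argument: you merge the two integrals, move the absolute value inside, bound $|x(t-\tau)|$ by $M$, and take the supremum over $t$. The integrability, measurability and vector-valued remarks are harmless additions that the paper leaves implicit.
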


The proof is in Appendix~\ref{sec:appx:proofs}.
For a linear readout, representation drift also bounds prediction instability (Proposition~\ref{prop:readout_sensitivity}), giving the intuition
\begin{equation}
\|\Delta h\| \;\lesssim\; M\,\|g_{e'} - \tilde g\|_{1},
\label{eq:chain_kernel_to_pred}
\end{equation}
so temporal-kernel mismatch can propagate to downstream error under shift.
Multi-scale kernels enlarge the approximating family in a structured way:
\begin{equation}
g_K(t) \;=\; \sum_{k=1}^{K} \alpha_k\, e^{-t/\tau_k},
\qquad \alpha_k\ge 0,\ \sum_k \alpha_k=1,\ \tau_1 \ge \tau_2 \ge \cdots \ge \tau_K.
\label{eq:mixture_exp_kernel}
\end{equation}
By tying $\{\tau_k\}$ to spectrum-derived knees and enforcing an ordered hierarchy, PIMSM aims to reduce the achievable mismatch $\inf_{\tilde g \in \mathcal{G}_K}\|g_{e'}-\tilde g\|_1$ without relying on shift-specific adaptation.
Appendix~\ref{sec:appx:kernel_approx} gives a complementary approximation view for power-law targets.

\section{Methods}
\label{sec:methods}

\subsection{Overall Architecture}
\label{sec:methods:overview}

\begin{figure}[htbp]
\centering
\includegraphics[width=\linewidth]{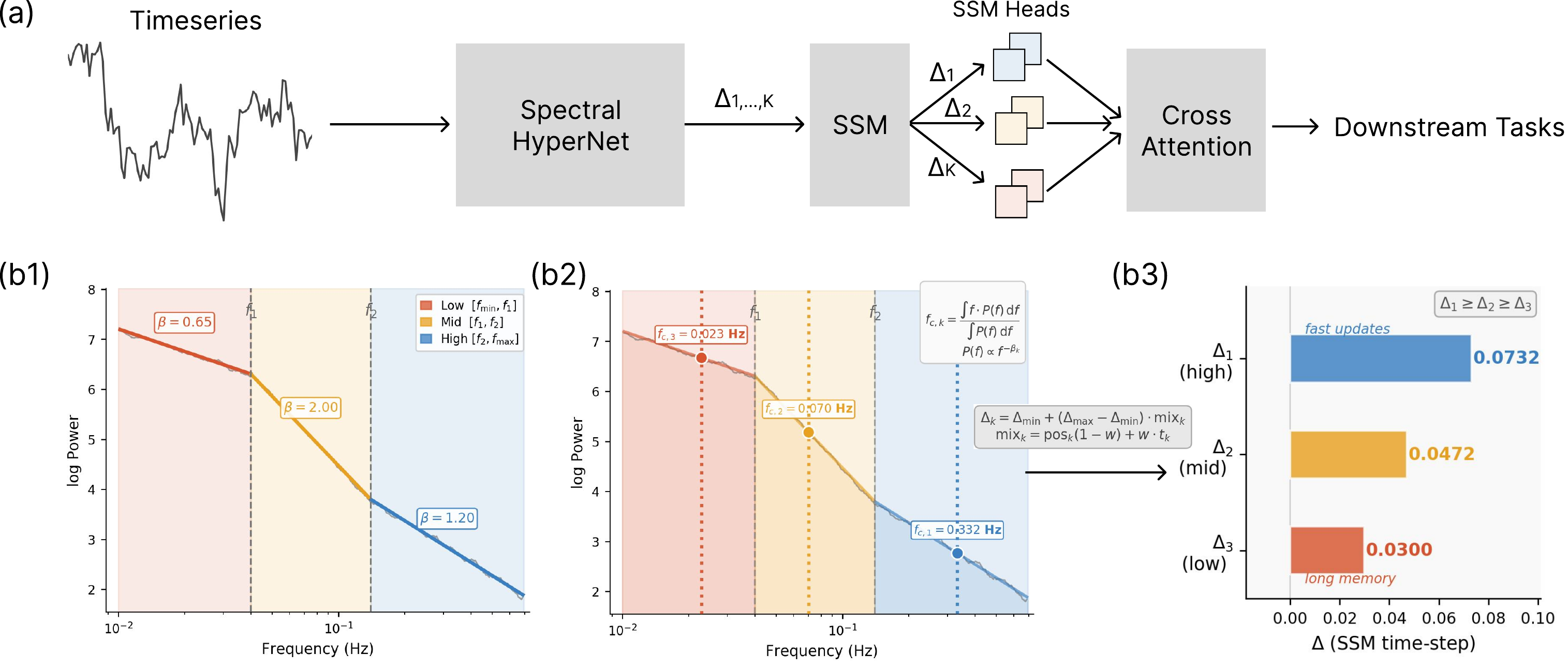}
\caption{
\textbf{Physics-informed multi-scale parameterization of temporal dynamics.}
\textbf{(a)} SpectralHyperNet computes the input PSD, estimates knee frequencies and spectral exponents, and fits a piecewise power-law PSD.
\textbf{(b1--b3)} The knees partition the frequency axis, not the time series; an energy-weighted representative frequency from each regime is mapped to ordered discretization parameters in acquisition units (TRs for fMRI, hours for weather; default $K=3$) for fast-to-slow SSM heads, whose states interact through cross-scale attention.
}
\label{fig:architecture_multiscale}
\end{figure}

PIMSM integrates multi-scale temporal structure into a state-space framework without modality-specific components. As shown in Fig.~\ref{fig:architecture_multiscale}, SpectralHyperNet estimates knee frequencies and scaling exponents from the input spectrum; these estimates parameterize scale-specific SSM timesteps $\Delta_k$ in the same acquisition units as the data. The original sequence is processed by fast-to-slow SSM head groups, and cross-scale attention models interactions among their states.

\subsection{Datasets}
\label{sec:methods:data}

Full dataset details and preprocessing are in Appendix~\ref{sec:appx:data}.
\textbf{HCP motor fMRI} (360 cortical ROIs, TR$=0.72$\,s, 5 motor conditions): primary benchmark for temporal-context truncation, where separate early-window models are trained and evaluated with backbone input restricted to the first 2/3 TRs of each event-locked motor block; low-resource transfer varies labeled training data from 1--100\%.
\textbf{HCP resting-state fMRI}: models trained from scratch on resting-state fMRI (284 TRs) and evaluated on held-out motor-task fMRI without adaptation (brain-state shift).
\textbf{Weather-5K}~\citep{han2024far}: 5,672 global weather stations; hourly multivariate forecasting under a spatial OOD split, where geographically isolated held-out stations are never observed during training. We prioritize this split because held-out stations directly test transfer to unseen geographic domains, whereas chronological holdouts evaluate later windows from already observed stations.

\subsection{SpectralHyperNet: Spectrum-Guided Scale Inference}

We model the power spectral density of each multivariate time series using the piecewise power-law form in Eq.~\eqref{eq:piecewise_powerlaw}, with a configurable number of frequency regimes. The main experiments use the default $K=3$ setting, separated by knee frequencies $(f_1, f_2)$.
We empirically verify this piecewise structure across datasets and domains in Appendix~\ref{sec:appx:psd_multifractal}.

\textbf{Spectral modeling.} Given a spectral context sequence $x^{\mathrm{spec}}$, we estimate the power spectral density $P(f)$ via FFT and fit Eq.~\eqref{eq:piecewise_powerlaw}. By default, $x^{\mathrm{spec}}$ is the same sequence processed by the SSM backbone; in HCP early-window experiments, SpectralHyperNet receives the 284-TR motor run while the SSM backbone receives only the first 3 or 2 TRs of the event-locked motor block. This tests whether full-run spectral structure stabilizes shortened task evidence, not a changed scanner TR. In the default $K=3$ setting, $(f_1, f_2)$ denote the two knees separating three regimes.

\textbf{Knee estimation.} SpectralHyperNet predicts the $K-1$ knee frequencies and $K$ scaling exponents from the spectrum, initialized from a data-driven scipy fit on the first training batch. A multi-resolution MAE spectral fit loss drives knee placement and a seam loss encourages piecewise continuity; hyperparameters are in Appendix~\ref{sec:appx:training_details}.

\textbf{Energy-weighted center frequency and discretization mapping.} The knees define frequency bands, but PIMSM does not use the geometric midpoint of each band as its scale. Instead, for a band $[f_a,f_b]$ with fitted slope $P_k(f)\propto f^{-\beta_k}$, we compute the energy centroid
\[
f_{c,k}=\frac{\int_{f_a}^{f_b} f\,P_k(f)\,df}{\int_{f_a}^{f_b} P_k(f)\,df}.
\]
The implementation evaluates this centroid in closed form, with stable special cases for $\beta_k\approx 1$ and $\beta_k\approx 2$. Thus, low-frequency-dominated bands yield lower representative frequencies than midpoint rules, while flatter bands move the representative frequency toward the band center.

\paragraph{Frequency-to-delta mapping.}
The core principle is $\Delta_k \propto f_{c,k}$: higher-frequency bands receive larger $\Delta$ (faster updates); lower-frequency bands receive smaller $\Delta$ (longer memory).
We mix global band position $p_k$ and within-band normalized frequency $t_k$ via $m_k = p_k(1-w) + w\,t_k$ ($w=0.3$), yielding $\Delta_k = \Delta_{\min} + (\Delta_{\max} - \Delta_{\min})\cdot m_k$.
A hierarchical constraint $\Delta_1 \ge \Delta_2 \ge \Delta_3$ enforces temporal ordering. Ablations over $w$ and normalization mode are in Appendix~\ref{sec:appx:delta_ablation}.

\begin{algorithm}[t]
\centering
\footnotesize
\caption{Spectrum-to-$\Delta$ parameterization in PIMSM.}
\label{alg:spectrum_to_delta}
\begin{tabular}{p{0.96\linewidth}}
\toprule
\textbf{Input:} time-domain sequence $x_{1:T}$ for the SSM backbone; spectral context $x^{\mathrm{spec}}$ (the same sequence by default, or the full 284-TR motor run in early-window HCP experiments); frequency bounds $[f_{\min}, f_{\max}]$; number of scales $K$ (default $K=3$). \\
\midrule
\textbf{1. PSD estimation:} compute $P(f)=|\mathrm{FFT}(x^{\mathrm{spec}})|^2$ over non-DC frequencies in $[f_{\min}, f_{\max}]$. \\
\textbf{2. Piecewise spectral fit:} SpectralHyperNet predicts $K-1$ knees and $K$ exponents; a differentiable piecewise power-law $\hat P(f)\propto f^{-\beta_k}$ is fit to $P(f)$ with spectral-fit and seam-continuity losses. \\
\textbf{3. Representative frequencies:} the knees define $K$ frequency bands; for each band, compute $f_{c,k}=\int fP_k(f)df/\int P_k(f)df$ under the fitted exponent $\beta_k$. \\
\textbf{4. Delta mapping:} map each $f_{c,k}$ to $\Delta_k$ using $\Delta_k \propto f_{c,k}$; by default, $m_k=p_k(1-w)+w\,t_k$ with $w=0.3$ and $\Delta_k=\Delta_{\min}+(\Delta_{\max}-\Delta_{\min})m_k$, followed by an ordered fast-to-slow constraint. \\
\textbf{5. Time-domain SSM processing:} process the original sequence $x_{1:T}$ without splitting or filtering it; SSM heads assigned to scale $k$ share $\Delta_k$, and their scale-specific states interact through cross-scale attention before prediction. \\
\bottomrule
\end{tabular}
\end{algorithm}

\textbf{Acquisition-unit anchoring.} The SSM transition depends on the product of the discretization step and the continuous-time state matrix (e.g., $\exp(\Delta A)$), so $\Delta$ and $A$ can compensate for one another if both are left unconstrained. PIMSM separates these roles by expressing $\Delta$ in acquisition units while constraining the scale of $A$. First, the spectrum-to-delta map expresses each $\Delta_k$ in dataset-native acquisition units and applies a log-scale regularizer to discourage arbitrary rescaling. Second, the continuous-time decay is initialized as $A=-\exp(A_{\log})$ with $A\approx -1/\tau$ and $\tau$ sampled from a log-uniform acquisition-time range. During training, an $A$-scale penalty
\[
\mathcal{L}_{A}
= \left(\log \overline{|A|} - \log a_0\right)^2,
\qquad
\overline{|A|}=\mathrm{mean}_j\,\exp(A_{\log,j}),
\]
keeps the average decay scale near its initialization $a_0$. This prevents $A$ from simply rescaling to undo the spectrum-derived $\Delta_k$, making the induced effective timescales $\tau_{\mathrm{eff}}\approx \Delta/|A|$ more identifiable in physical time units. Implementation details are in Appendix~\ref{sec:appx:arch_details}.

\subsection{Multi-Scale State-Space Block}

Each PIMSM block extends the Mamba-2 Structured State Space Duality (SSD) kernel~\citep{dao2024transformers} to a multi-scale setting because its multi-head SSM structure provides a natural unit for assigning different discretization parameters to different head groups. This makes Mamba-2 a convenient backbone for multi-scale temporal parameterization: PIMSM can introduce fast-to-slow temporal kernels by sharing $\Delta_k$ within each scale group, without substantially increasing the parameter count relative to a single-scale Mamba2 baseline. Heads within the same scale share $\Delta_k$ but maintain independent state transitions; using multiple heads per scale increases within-scale representational capacity while preserving the intended temporal hierarchy. The resulting scale-specific states are fused through lightweight cross-scale attention to model interactions among temporal scales. Full architectural equations and head counts are given in Appendix~\ref{sec:appx:arch_details}.

\subsection{Backbone, Prediction Head, and Training}
\label{sec:methods:training}

The final hidden sequence is temporally pooled and passed to a task-specific prediction head for classification or forecasting. All models are trained under a shared protocol with matched parameter budgets across baselines. Architecture dimensions and parameter-matching details are provided in Appendix~\ref{sec:appx:arch_details}; optimization settings, Weather-5K preprocessing, RevIN normalization~\citep{kim2021reversible}, and OOD split construction are reported in Appendix~\ref{sec:appx:training_details} and Appendix~\ref{sec:appx:data}.

\section{Results}\label{sec:results}

Across all four deployment-motivated axes, comparisons use parameter-matched baselines (Mamba2, Transformer, 1D-CNN), with $\approx$7.85M parameters for HCP experiments and $\approx$18M parameters for Weather-5K forecasting.

\subsection{Axis (i): Temporal-Context Shift --- HCP Motor Early-Window Truncation}
\label{sec:results:hcp_tr_shift}

\begin{figure}[htbp]
\centering
\includegraphics[width=\linewidth]{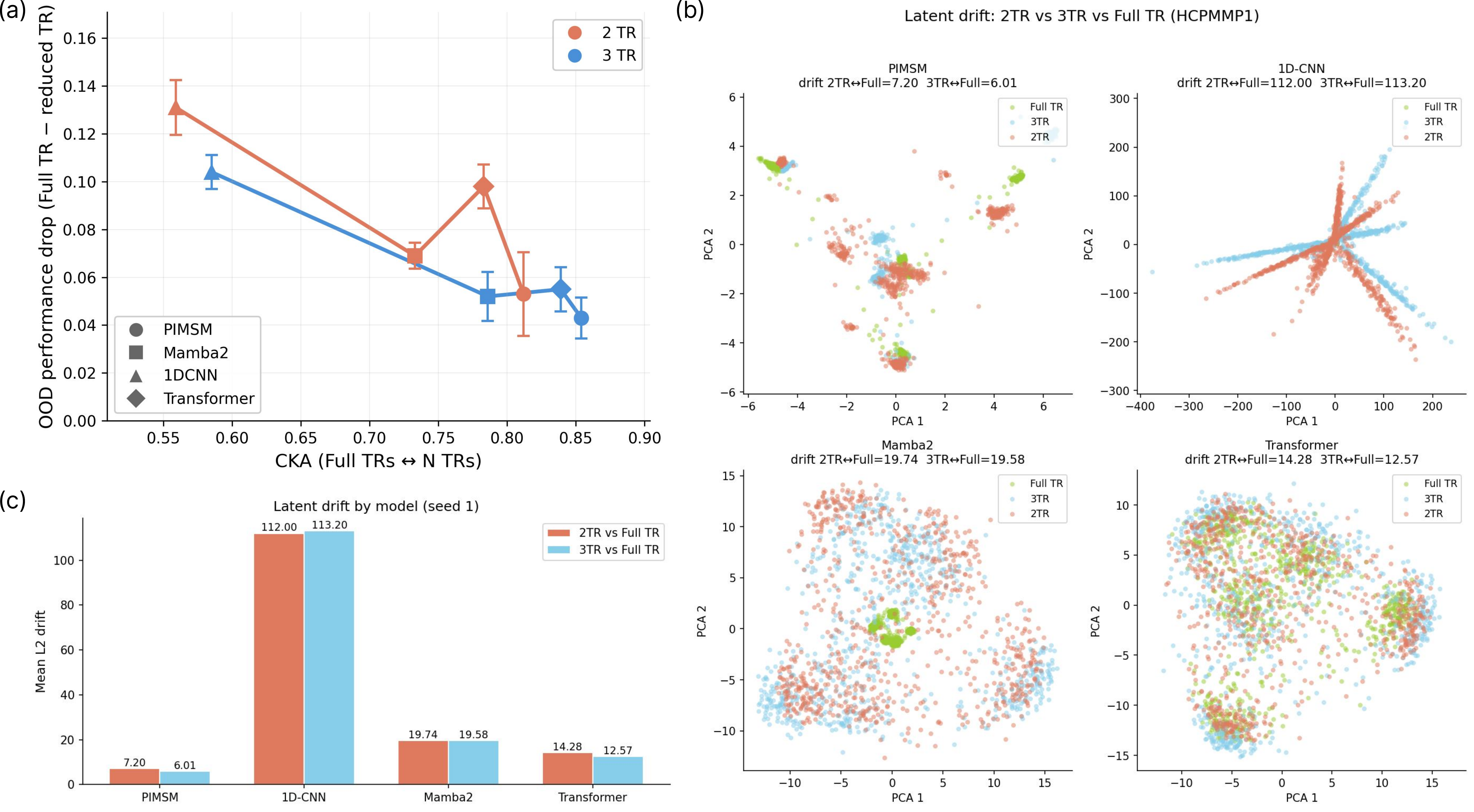}
\caption{
\textbf{Representation stability under temporal-context truncation (HCP motor task fMRI).}
\textbf{(a)} Relationship between representation similarity (CKA; full motor block vs early-window inputs) and performance gap (full block $\rightarrow$ 2TR / 3TR).
\textbf{(b)} PCA visualization of latent embeddings under full-block, 3TR, and 2TR inputs.
\textbf{(c)} Mean $\ell_2$ latent drift between full-block and early-window conditions.
}
\label{fig:tr_shift_stability}
\end{figure}

We assess robustness to temporal-context shift by training and evaluating separate early-window models whose SSM backbone input is restricted to the first 3 or 2 TRs of each event-locked motor block, simulating severe early-window decoding while preserving task labels and subject identity.
SpectralHyperNet receives the full 284-TR motor run for scale calibration, so the experiment isolates whether run-level spectral structure stabilizes representations when downstream task evidence is temporally shortened, rather than simulating a changed scanner TR.

\paragraph{Full block $\rightarrow$ 2 TRs (severe truncation).}
Table~\ref{tab:drift2tr} shows that PIMSM has the smallest full-block$\to$2TR accuracy gap while also achieving the highest representation similarity between full-block and early-window embeddings, measured by both CKA and dCor.
This pattern is more informative than shifted accuracy alone: PIMSM preserves the latent geometry of the full-block solution when only the earliest time points are available.
Replacing spectrum-derived $\Delta$ with learnable or random values increases the accuracy gap and lowers CKA/dCor, indicating that robustness is not explained by multi-scale capacity alone but by spectrum-aligned temporal anchoring.
PIMSM~(learnable~$\Delta$) reaches a 2TR accuracy similar to Mamba2, yet its CKA is lower and less stable than full PIMSM, indicating that spectrum-derived initialization anchors geometry more consistently across runs.
Across the three paired seeds, PIMSM improves over Mamba2 on all shifted 2TR metrics (accuracy, accuracy gap, CKA, and dCor).
Given the small seed count, we treat paired tests as exploratory rather than definitive; the consistent direction of CKA and dCor improvements across all three seeds supports the robustness of the representation-stability effect.

\begin{table}[htbp]
\centering
\footnotesize
\begin{tabular}{lcccc}
\toprule
Model & Full block & 2 TRs & CKA (Full block$\leftrightarrow$2TR) & dCor (Full block$\leftrightarrow$2TR) \\
\midrule
Mamba2 & $0.983\pm0.002$ & $0.914\pm0.005$ & $0.733\pm0.018$ & $0.770\pm0.011$ \\
1DCNN & $0.978\pm0.007$ & $0.847\pm0.009$ & $0.559\pm0.020$ & $0.622\pm0.012$ \\
Transformer & $0.982\pm0.006$ & $0.884\pm0.007$ & $0.783\pm0.015$ & $0.805\pm0.005$ \\
PIMSM (learnable $\Delta$) & $0.982\pm0.001$ & $0.913\pm0.030$ & $0.776\pm0.086$ & $0.804\pm0.078$ \\
PIMSM (random $\Delta$) & $0.983\pm0.004$ & $0.889\pm0.055$ & $0.737\pm0.105$ & $0.774\pm0.108$ \\
\midrule
\textbf{PIMSM} & $\mathbf{0.987\pm0.007}$ & $\mathbf{0.933\pm0.016}$ & $\mathbf{0.812\pm0.033}$ & $\mathbf{0.846\pm0.022}$ \\
\bottomrule
\end{tabular}
\caption{
\textbf{HCP motor decoding under severe temporal-context truncation (full block $\rightarrow$ 2 TRs).}
Values are mean $\pm$ std over three seeds. Ablations test whether spectrum-derived initialization, rather than multi-scale capacity alone, drives geometry preservation.
}
\label{tab:drift2tr}
\end{table}

\paragraph{Full block $\rightarrow$ 3 TRs (moderate truncation).}
Under the milder full-block $\rightarrow$ 3 TRs setting, PIMSM shows the same trend, achieving the best 3TR accuracy and stable representations among parameter-matched baselines (Appendix Table~\ref{tab:drift3tr}).
The same paired-seed pattern holds at 3TR: PIMSM improves over Mamba2 on accuracy, CKA, and dCor for all three seeds, again supporting the robustness trend without claiming formal statistical significance.

\paragraph{Drift intervention and scaling.}
Explicit drift regularization on Mamba2 improves stability without degrading ID performance, providing evidence that representation drift is a useful intervention target; PIMSM achieves comparable stability through architecture alone (detailed tables and scaling figures in Appendix~\ref{sec:appx:hcp_tr_shift}).
Complementary WeightWatcher analysis (Appendix~\ref{sec:appx:weightwatcher}) shows that PIMSM preserves heterogeneous layer-wise power-law exponents under full-block and 2TR inputs, whereas Mamba2 concentrates near a saturated regime, supporting the view that PIMSM resists representation homogenization rather than merely improving output accuracy.

\subsection{Axis (ii): Low-Resource Generalization — HCP Motor Data Scarcity}
\label{sec:results:data_scarce}

Table~\ref{tab:data_scarce} reports motor decoding accuracy as labeled training data varies from 1\% to 100\%.
The data-scarce split is implemented by first constructing the standard subject-level train/validation/test split and then randomly subsampling only the training set according to the specified data ratio; validation and test subjects are kept fixed.
Thus, the 1\% condition tests generalization from an extremely small labeled training subset to the same held-out motor-task test distribution, rather than making the test set smaller or easier.
In this most data-scarce setting, PIMSM is the strongest model, indicating that spectrum-aligned temporal kernels are most useful when the supervised signal is extremely limited.
As the label budget increases, differences among architectures narrow and the advantage is no longer uniform, suggesting that the main benefit is improved sample efficiency under extreme supervision scarcity rather than blanket dominance across all data regimes.
Pretraining on resting-state fMRI further improves robustness; linear probing results are reported in Appendix~\ref{sec:appx:pretrain}.

\begin{table}[htbp]
\centering
\footnotesize
\begin{tabular}{lcccc}
\toprule
Model & 1\% & 5\% & 10\% & 100\% \\
\midrule
PIMSM & $\mathbf{0.797\pm0.040}$ & $0.940\pm0.017$ & $0.963\pm0.011$ & $\mathbf{0.987\pm0.007}$ \\
Mamba2 & $0.752\pm0.015$ & $0.934\pm0.022$ & $0.963\pm0.006$ & $0.983\pm0.002$ \\
1DCNN & $0.709\pm0.026$ & $0.927\pm0.018$ & $0.956\pm0.007$ & $0.978\pm0.007$ \\
Transformer & $0.774\pm0.031$ & $\mathbf{0.948\pm0.017}$ & $\mathbf{0.967\pm0.006}$ & $0.982\pm0.006$ \\
\bottomrule
\end{tabular}
\caption{
\textbf{Low-resource generalization on HCP motor decoding.}
Values are mean $\pm$ std over three seeds. Models are pretrained on resting-state fMRI and finetuned with varying fractions of labeled data.
}
\label{tab:data_scarce}
\end{table}

\subsection{Axis (iii): Brain-State Shift --- Resting-State → Task Generalization}
\label{sec:results:brainstateshift}

We train from scratch on HCP resting-state fMRI for sex classification and evaluate the same label on motor-task fMRI without adaptation, testing whether temporal kernels learned from spontaneous dynamics remain useful under task-evoked acquisition state.
The implementation restricts subjects to those with both resting-state and HCPMMP1 motor-task files, forms a random subject split, and uses resting runs for train/validation while replacing the held-out test modality with motor-task fMRI.
Sex classification is used because the Gender metadata label is scan-condition independent and available for both scan types, allowing evaluation to change brain/acquisition state while keeping the target fixed.
We do not report CKA/dCor for Axis (iii) because resting-state and task-fMRI are not paired observations of the same input under two temporal conditions; cross-condition similarity would conflate representation stability with differences in behavioral state and sequence content.
PIMSM achieves the highest accuracy and substantially lower cross-seed variance (Table~\ref{tab:brainstateshift}), suggesting reduced sensitivity to the resting$\to$task acquisition-state shift in this setting rather than only a higher mean score.

\begin{table}[htbp]
\centering
\footnotesize
\begin{tabular}{lc}
\toprule
Model & Accuracy \\
\midrule
Mamba2      & $0.591\pm0.038$ \\
1D-CNN      & $0.589\pm0.040$ \\
Transformer & $0.554\pm0.063$ \\
\midrule
\textbf{PIMSM} & $\mathbf{0.632\pm0.013}$ \\
\bottomrule
\end{tabular}
\caption{
\textbf{Brain-state shift: resting-state training $\to$ motor-task test (Axis iii).}
Values are mean $\pm$ std over three seeds. Models are trained for sex classification on HCP resting-state fMRI and evaluated on HCP motor-task fMRI without adaptation; the target is shared across scan conditions.
}
\label{tab:brainstateshift}
\end{table}

\subsection{Axis (iv): Cross-Domain Shift — Weather-5K}
\label{sec:results:weather}

We evaluate whether the same spectrum-guided temporal parameterization transfers to meteorological forecasting without domain-specific adaptation.
Weather-5K is a substantially different modality and sampling regime from fMRI, with hourly station measurements and heterogeneous target units; Table~\ref{tab:weather} therefore reports variable-wise MAE under the held-out-station spatial OOD split.
Across forecasting horizons, PIMSM attains the lowest MAE on all 24 variable-horizon endpoints, including scalar meteorological variables (TMP, SLP, DEW), wind rate, and both wind-angle components.
Against the best non-PIMSM baseline for each endpoint, PIMSM reduces MAE by 15.2\% on average, with larger gains at longer horizons (11.4\% at 24h versus 17.9\% at 168h).
\begin{table}[htbp]
\centering
\scriptsize
\setlength{\tabcolsep}{1.1pt}
\begin{tabular}{llcccccc}
\toprule
Horizon & Model & TMP & SLP & DEW & Wind Rate & Wind Cos & Wind Sin \\
\midrule
24h & Mamba2      & $1.823\pm0.127$ & $2.486\pm0.379$ & $1.951\pm0.287$ & $1.903\pm0.009$ & $0.561\pm0.000$ & $0.530\pm0.020$ \\
    & 1D-CNN      & $1.150\pm0.066$ & $1.505\pm0.462$ & $1.316\pm0.213$ & $1.330\pm0.008$ & $0.416\pm0.000$ & $0.389\pm0.018$ \\
    & Transformer & $1.351\pm0.078$ & $2.066\pm0.648$ & $1.417\pm0.256$ & $1.537\pm0.009$ & $0.462\pm0.000$ & $0.450\pm0.018$ \\
    & \textbf{PIMSM} & $\mathbf{0.921\pm0.052}$ & $\mathbf{1.190\pm0.341}$ & $\mathbf{1.066\pm0.041}$ & $\mathbf{1.301\pm0.154}$ & $\mathbf{0.410\pm0.005}$ & $\mathbf{0.370\pm0.014}$ \\
\midrule
72h & Mamba2      & $1.827\pm0.533$ & $3.984\pm0.640$ & $2.513\pm0.121$ & $1.949\pm0.312$ & $0.587\pm0.011$ & $0.591\pm0.007$ \\
    & 1D-CNN      & $1.541\pm0.247$ & $2.931\pm0.776$ & $1.855\pm0.613$ & $1.569\pm0.353$ & $0.479\pm0.019$ & $0.482\pm0.077$ \\
    & Transformer & $2.266\pm0.678$ & $3.911\pm0.946$ & $2.017\pm0.163$ & $1.732\pm0.351$ & $0.533\pm0.012$ & $0.513\pm0.024$ \\
    & \textbf{PIMSM} & $\mathbf{1.167\pm0.127}$ & $\mathbf{2.214\pm0.826}$ & $\mathbf{1.411\pm0.108}$ & $\mathbf{1.512\pm0.204}$ & $\mathbf{0.475\pm0.013}$ & $\mathbf{0.439\pm0.018}$ \\
\midrule
120h & Mamba2      & $2.469\pm0.287$ & $4.522\pm0.778$ & $2.797\pm0.235$ & $2.010\pm0.468$ & $0.594\pm0.013$ & $0.624\pm0.033$ \\
     & 1D-CNN      & $1.790\pm0.184$ & $3.390\pm1.359$ & $2.222\pm0.247$ & $1.633\pm0.324$ & $0.508\pm0.018$ & $0.527\pm0.039$ \\
     & Transformer & $2.108\pm0.321$ & $4.629\pm1.278$ & $2.312\pm0.165$ & $1.818\pm0.315$ & $0.551\pm0.017$ & $0.580\pm0.045$ \\
     & \textbf{PIMSM} & $\mathbf{1.263\pm0.153}$ & $\mathbf{2.585\pm0.991}$ & $\mathbf{1.546\pm0.131}$ & $\mathbf{1.565\pm0.214}$ & $\mathbf{0.494\pm0.012}$ & $\mathbf{0.457\pm0.021}$ \\
\midrule
168h & Mamba2      & $2.588\pm0.290$ & $4.848\pm0.863$ & $2.922\pm0.215$ & $2.021\pm0.336$ & $0.596\pm0.010$ & $0.627\pm0.009$ \\
     & 1D-CNN      & $2.009\pm0.301$ & $3.748\pm1.012$ & $2.516\pm0.299$ & $1.667\pm0.346$ & $0.523\pm0.008$ & $0.556\pm0.015$ \\
     & Transformer & $2.159\pm0.341$ & $4.724\pm1.126$ & $2.294\pm0.198$ & $1.747\pm0.312$ & $0.537\pm0.011$ & $0.580\pm0.025$ \\
     & \textbf{PIMSM} & $\mathbf{1.334\pm0.167}$ & $\mathbf{2.849\pm1.069}$ & $\mathbf{1.638\pm0.139}$ & $\mathbf{1.610\pm0.215}$ & $\mathbf{0.506\pm0.012}$ & $\mathbf{0.476\pm0.019}$ \\
\bottomrule
\end{tabular}
\caption{
\textbf{Weather-5K cross-domain forecasting (Axis iv).}
Values are mean $\pm$ std. Variable-wise spatial OOD MAE is reported for temperature (TMP), sea-level pressure (SLP), dew point (DEW), wind rate, and wind angle components (cos/sin) across 24h, 72h, 120h, and 168h forecasting horizons. PIMSM is applied without domain-specific adaptation.
}
\label{tab:weather}
\end{table}

\section{Related Work}
\label{sec:related}

\paragraph{Foundation models for scientific time series.}
Brain foundation models and time-series forecasting foundation models demonstrate the value of large-scale pretraining, zero-/low-shot adaptation, and reusable representations across datasets~\citep{tak2026generalizable,wang2025towards,wang2025slim,wang2026omni,abhimanyu2024decoder,ansari2024chronos,woo2024unified}.
These works primarily ask how far representation learning can be pushed by larger corpora, generic sequence objectives, or adaptation modules.
PIMSM is complementary: it asks whether the backbone itself preserves physically meaningful temporal scales under shift, even without modality-specific components or shift-specific supervision.

\paragraph{Multi-scale state-space and Mamba models.}
Recent multi-scale SSM/Mamba variants motivate the need for temporal hierarchy, but instantiate scale in different ways.
MS-SSM~\citep{Karami2025MSSSMAM} processes multiple resolutions with specialized state-space dynamics and an input-dependent scale mixer.
ms-Mamba~\citep{karadag2025ms} uses multiple Mamba blocks with different sampling rates to model forecasting signals at several temporal scales.
STM3~\citep{chen2025stm3} targets long-term spatio-temporal prediction with multiscale Mamba experts, adaptive graph causal convolution, and mixture-of-experts routing.
WMF-Traffic~\citep{li2025multi} combines wavelet decomposition, traffic-aware Mamba, and Fourier adjustment for traffic forecasting.
PIMSM differs from these scale-by-design approaches by deriving scale-specific discretization from empirical spectra, enforcing an ordered temporal hierarchy through knee frequencies and energy-weighted representative frequencies, and constraining $A$ so that spectrum-derived $\Delta$ remains interpretable in dataset-native acquisition units.
Because several concurrent multi-scale SSM/Mamba variants either lack mature public implementations or target different benchmarks, our primary comparisons use parameter-matched, widely established backbones (Mamba2, Transformer, and 1D-CNN); the related models position the design space rather than serve as reimplemented baselines.

\section{Discussion}
\label{sec:discussion}

PIMSM explicitly injects multi-scale timesteps into SSM-based time-series modeling. By coupling spectrum-derived knees to scale-specific $\Delta$ values and regularizing $A$ to prevent compensatory rescaling, the backbone learns a memory policy tied to the signal's physical organization rather than to a split-specific hyperparameter setting. The resulting gains are therefore not framed as accuracy improvements alone, but as evidence that a model can use physical temporal structure to preserve reusable representations across brain and atmospheric dynamics.

This matters when the available temporal evidence, label budget, or deployment distribution changes. The HCP experiments cover early-window decoding, scarce labels, and resting-state$\rightarrow$task transfer; the Weather-5K experiment shows that the same spectrum-to-memory mechanism applies outside neuroimaging. The stronger gains at longer weather horizons are consistent with spectrum-derived scale assignment allocating memory across empirical slow and fast regimes, beyond simply using a long-memory Mamba backbone.

The present study also has a bounded scope. The HCP temporal-context experiments test input truncation rather than scanner-level TR changes, and PIMSM assumes that a small number of spectrum-derived knees is sufficient to summarize temporal organization. Domains with weak piecewise spectral structure, unknown sampling intervals, or a different natural number of scales may require adjusted parameterizations. We also report three-seed HCP results without formal significance claims and discuss, rather than reimplement, concurrent multi-scale SSM/Mamba baselines.

These limitations leave a clear deployment target. PIMSM addresses recurring real-world constraints with one mechanism: short recordings in low-latency BCI, small labeled task-fMRI cohorts, difficult resting$\to$task reuse despite abundant resting-state data, and spatial OOD forecasting for remote or sparse weather stations. This suggests that physics-informed temporal parameterization can serve as a reusable design principle for models that must understand and preserve the temporal structure of scientific signals under deployment shifts, not only optimize curated in-distribution benchmarks.

\newpage
\appendix

\section{Training Details}
\label{sec:appx:training_details}

For HCP experiments, all models are trained for 35 epochs using AdamW ($\mathrm{lr} = 10^{-3}$, weight decay $= 0.1$) with cosine annealing and linear warmup. Gradient norms are clipped to 1.0. Label smoothing ($\epsilon = 0.1$) is applied for classification. Batch size is 16 for the brain-state shift experiment (Axis iii) and 32 for motor decoding (Axes i--ii). HCP runs use three random seeds; we report mean $\pm$ standard deviation of test accuracy. Each HCP fMRI run is trained on a single NVIDIA RTX 3090 GPU (24GB) on the current server.

Weather-5K forecasting uses the same parameter-matched backbone family but a forecasting-specific optimization setup: 30 epochs, AdamW with learning rate $5\times10^{-5}$, weight decay $0.05$, cosine scheduling with 5 warmup epochs, batch size 2048, and early stopping on validation MSE. The input window is 48 hourly observations and the prediction head outputs $H\times6$ values for horizons $H\in\{24,72,120,168\}$ hours. Training uses MSE in the globally normalized target space; reported MAE is converted back to original variable units for interpretability. Because Weather-5K is substantially larger than HCP fMRI, Weather-5K runs are trained with distributed data parallelism on four 80GB NVIDIA H100 GPUs.

\textbf{Weather-5K RevIN.}
For Weather-5K, we apply a stateless reversible instance normalization (RevIN)~\citep{kim2021reversible} to reduce station-level mean/variance shortcuts under held-out-station spatial OOD. After the dataset-level training-statistic z-score normalization, RevIN computes the per-window, per-variable mean and standard deviation over time, feeds $(x-\mu)/\sigma$ to the sequence backbone, and denormalizes the $H$-step forecast with the same $(\mu,\sigma)$ before the MSE loss and evaluation metrics are computed. SpectralHyperNet receives the pre-RevIN signal (or an explicit longer spectral context when provided), so PSD-based knee estimation is not driven by the window-wise amplitude removal introduced by RevIN.

The total training objective for PIMSM is:
\begin{equation}
\mathcal{L} = \mathcal{L}_{\mathrm{task}} + w_{\mathrm{fit}}\,\mathcal{L}_{\mathrm{fit}} + w_{\mathrm{seam}}\,\mathcal{L}_{\mathrm{seam}} + \lambda_\Delta\,\mathcal{L}_{\Delta} + \lambda_A\,\mathcal{L}_A + w_{\mathrm{hyp}}\,\mathcal{L}_{\mathrm{hyp}} + \lambda_\beta\,\mathcal{L}_{\beta},
\end{equation}
where:
\begin{itemize}
    \item $\mathcal{L}_{\mathrm{task}}$: cross-entropy classification loss (with label smoothing $\epsilon=0.1$);
    \item $\mathcal{L}_{\mathrm{fit}}$: multi-scale MAE spectral fit loss ($w_{\mathrm{fit}} = 3.0$), averaged equally across all ROIs;
    \item $\mathcal{L}_{\mathrm{seam}}$: piecewise continuity (seam) loss penalizing log-PSD discontinuities at $f_1, f_2$ ($w_{\mathrm{seam}} = 0.1$);
    \item $\mathcal{L}_{\Delta} = \lambda_\Delta\,\mathbb{E}[\log(\Delta/\mathrm{TR})^2]$: TR-anchoring regularization ($\lambda_\Delta = 0.1$);
    \item $\mathcal{L}_A = \lambda_A \sum_j (\log|\bar{A}_j| - \log a_0)^2$: $A$-scale regularization ($\lambda_A = 0.1$);
    \item $\mathcal{L}_{\mathrm{hyp}}$: auxiliary spectral alignment loss ($w_{\mathrm{hyp}} = 0.3$);
    \item $\mathcal{L}_{\beta}$: extreme-$\beta$ regularization penalizing exponents approaching $[0.3, 5.0]$ boundaries ($\lambda_{\beta} = 0.5$).
\end{itemize}
The acquisition-unit anchoring of $\Delta$, $\tau$-initialization of $A$, and $\mathcal{L}_A$ constitute the default PIMSM protocol across all experiments unless an ablation removes a component.

\section{Architecture Details}
\label{sec:appx:arch_details}

\subsection{Multi-Scale State-Space Block}

Each PIMSM block extends the Mamba-2 Structured State Space Duality (SSD) kernel to a multi-scale setting.

\textbf{Backbone and prediction head.} The backbone consists of nine stacked PIMSM blocks with intermediate gated MLP layers. The model dimension is $d_{\mathrm{model}} = 320$ with an intermediate MLP dimension of $d_{\mathrm{inter}} = 1024$. Temporal mean pooling is applied to the final hidden sequence, followed by a task-specific linear classifier (binary for HCP sex/brain-state shift, 5-class for motor decoding, regression/forecasting for Weather-5K). Parameter counts are matched across all baselines ($\approx$7.85M parameters): Mamba2 ($d_{\mathrm{model}}=352$, 9 layers), Transformer ($d_{\mathrm{model}}=280$, 8 layers), and 1D-CNN (channels 220/440/660/880).

\textbf{Head grouping.} We employ $H = 6$ SSM heads grouped into $K = 3$ scale groups, with two heads per scale. Heads within the same scale share the discretization parameter $\Delta_k$ but maintain independent state transition matrices.

\textbf{State transition initialization.} The SSM $A$ matrix is initialized as $A = -1/\tau$ with $\tau \sim \mathrm{LogUniform}(1 \times \mathrm{TR}, 100 \times \mathrm{TR})$, reflecting the physiologically motivated time-constant range of BOLD dynamics. An $A$-scale regularization $\mathcal{L}_A$ is applied to prevent $A$ from drifting away from this initialization to compensate for $\Delta$, ensuring that the joint constraint on $\tau_\text{eff}$ remains effective throughout training.

\textbf{State evolution.} For each head $j$ assigned to scale $k(j)$, the discretized state update is:
\begin{equation}
\mathbf{h}^{(j)}_t = \bar{A}^{(j)}(\Delta_{k(j)}) \mathbf{h}^{(j)}_{t-1} + \bar{B}(\Delta_{k(j)}) \mathbf{x}_t.
\end{equation}
Input projections are shared across heads, while state transitions differ to capture scale-specific decay dynamics.

\textbf{Scale aggregation.} Within each scale group, head outputs are averaged:
\begin{equation}
\mathbf{h}^{(k)}_t = \frac{1}{|J_k|} \sum_{j \in J_k} \mathbf{h}^{(j)}_t.
\end{equation}

\subsection{Cross-Scale Fusion}

Scale-specific representations are concatenated and passed through a lightweight cross-scale attention module:
\begin{align}
\mathbf{h}_t &= \mathrm{Concat}(\mathbf{h}^{(1)}_t, \mathbf{h}^{(2)}_t, \mathbf{h}^{(3)}_t), \\
\mathrm{Attn}(Q, K, V) &= \mathrm{softmax}\!\left( \frac{QK^\top}{\sqrt{d}} \right) V.
\end{align}
This allows high- and low-frequency information to be selectively combined while preserving structured multi-scale dynamics.

\section{Self-Supervised Pretraining Details}
\label{sec:appx:pretraining}

\paragraph{Uniform spatiotemporal masking.}
We adopt a uniform masking strategy applied jointly over the temporal and spatial dimensions of the input sequence.
Each $(t,r)$ location (time $t$, ROI $r$) is masked independently with a fixed probability of $10\%$.
To further increase reconstruction difficulty and encourage learning of spatial dependencies, we additionally apply ROI-level temporal masking: for each ROI, contiguous time points corresponding to approximately $30\%$ of the sequence length are randomly selected and masked.

\paragraph{SpectralHyperNet during pretraining.}
SpectralHyperNet operates on the \emph{original unmasked} input $x_{1:T}$ to estimate the piecewise power-law parameters:
\[
(f_1, f_2, \beta_1, \beta_2, \beta_3) = H_{\phi}(x_{1:T}).
\]
Importantly, the knee frequencies $(f_1, f_2)$ determine the scale-specific discretization parameters $\Delta_k$ (via the energy-weighted center frequency mapping $\Delta_k = \Psi(f_{c,k})$), while the scaling exponents $\beta_k$ serve as supervision targets for the spectral fit loss and inform the energy-weighted computation of $f_{c,k}$; they do not appear explicitly in the $\Delta_k$ formula. This design ensures that spectral estimates remain stable even when the input to the SSM backbone is partially corrupted by masking.

\paragraph{Training objective.}
The reconstruction loss is an $L_1$ loss over masked positions only:
\[
\mathcal{L}_{\text{pretrain}} = \| x_{\text{masked}} - \hat{x}_{\text{masked}} \|_1.
\]

\section{HCP Motor Decoding under Temporal-Context Shift: Additional Details}
\label{sec:appx:hcp_tr_shift}

This appendix provides supporting detail for Section~\ref{sec:results:hcp_tr_shift}. The severe 2TR result is reported in the main text (Table~\ref{tab:drift2tr}); here we include the moderate 3TR table, drift intervention tables, scaling figures, and Figure~\ref{fig:tr_shift_stability}. As in the main text, early-window models are trained and evaluated on first-2/3TR motor-block inputs, while SpectralHyperNet uses the full 284-TR motor run for spectral calibration.

\begin{table}[htbp]
\centering
\footnotesize
\begin{tabular}{lcccc}
\toprule
Model & Full block & 3 TRs & CKA (Full block$\leftrightarrow$3TR) & dCor (Full block$\leftrightarrow$3TR) \\
\midrule
Mamba2 & $0.983\pm0.002$ & $0.931\pm0.010$ & $0.786\pm0.031$ & $0.817\pm0.025$ \\
1DCNN & $0.978\pm0.007$ & $0.874\pm0.001$ & $0.585\pm0.008$ & $0.658\pm0.001$ \\
Transformer & $0.982\pm0.006$ & $0.927\pm0.007$ & $0.839\pm0.019$ & $0.864\pm0.016$ \\
PIMSM (learnable $\Delta$) & $0.982\pm0.001$ & $0.937\pm0.004$ & $0.865\pm0.006$ & $0.881\pm0.005$ \\
PIMSM (random $\Delta$) & $0.983\pm0.004$ & $0.934\pm0.003$ & $0.846\pm0.026$ & $0.873\pm0.019$ \\
\midrule
\textbf{PIMSM} & $\mathbf{0.987\pm0.007}$ & $\mathbf{0.944\pm0.005}$ & $\mathbf{0.854\pm0.022}$ & $\mathbf{0.878\pm0.020}$ \\
\bottomrule
\end{tabular}
\caption{
\textbf{HCP motor decoding under moderate temporal-context truncation (full block $\rightarrow$ 3 TRs).}
Values are mean $\pm$ std over three seeds.
}
\label{tab:drift3tr}
\end{table}

\paragraph{Drift intervention.}
We introduce explicit drift regularization for Mamba2: $\mathcal{L} = \mathcal{L}_{\text{task}} + \lambda D(z(x^{\text{full}}), z(x^{\text{trunc}}))$. This model (Mamba2+drift) improves representation stability and decoding accuracy under temporal-context truncation without degrading ID performance, supporting representation drift as a useful robustness target. PIMSM achieves comparable stability through architecture alone, without explicit shift supervision. See Tables~\ref{tab:drift_intervention_2tr} and~\ref{tab:drift_intervention_3tr}.

\paragraph{Scaling behavior.}
Larger PIMSM models exhibit increasingly stable representations under temporal-context truncation, whereas the single-scale SSM baseline shows smaller robustness gains from additional capacity (Figure~\ref{fig:scaling}). Additional WeightWatcher analysis (Section~\ref{sec:appx:weightwatcher}) provides complementary evidence that multi-scale structure helps preserve representation diversity.

\begin{figure}[htbp]
\centering
\includegraphics[width=\textwidth]{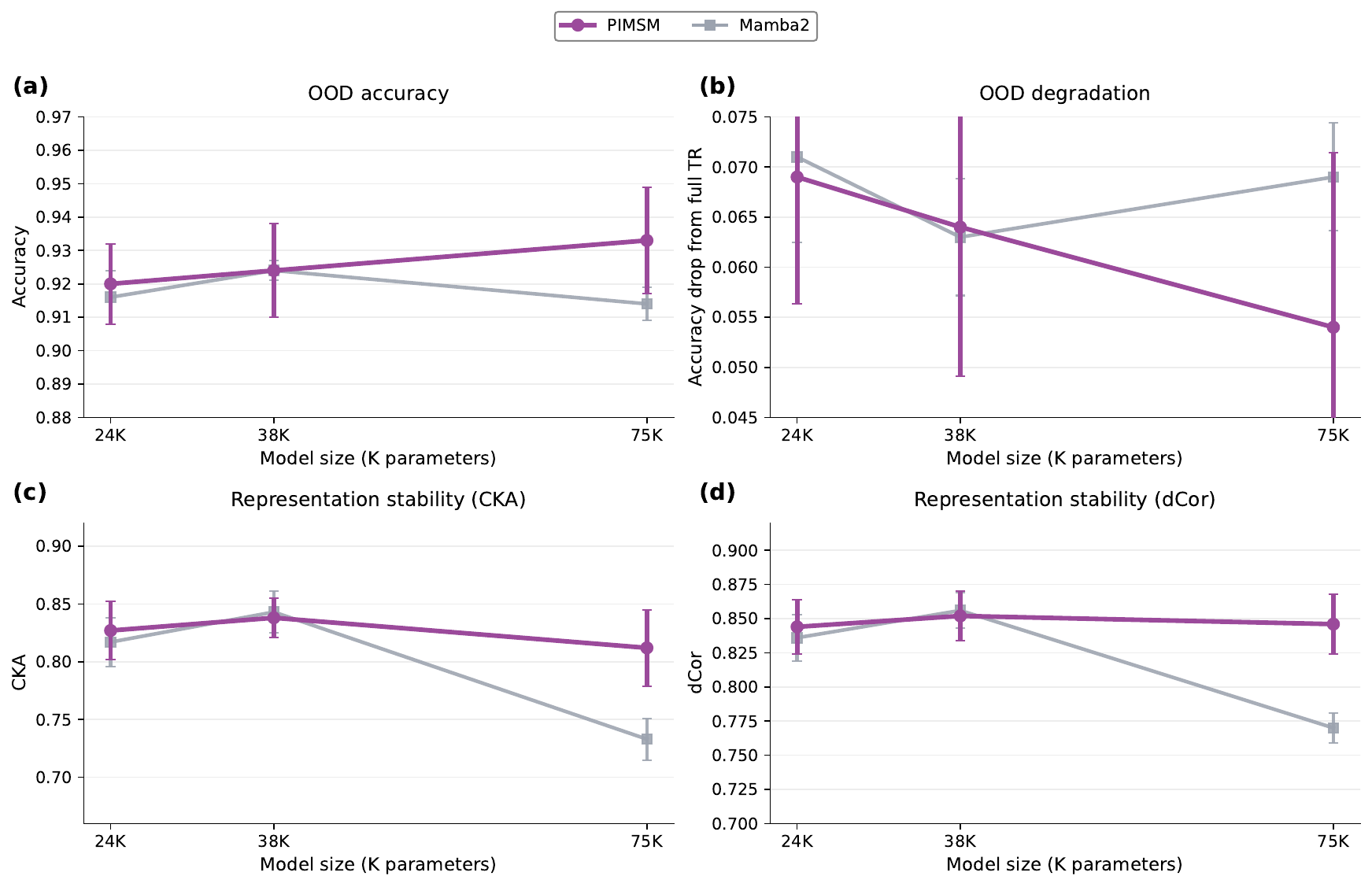}
\caption{
\textbf{Scaling behavior under temporal-context truncation (full block $\rightarrow$ 2TR).}
(a) Decoding accuracy vs model size. (b) Accuracy gap relative to full-block input. (c--d) CKA and dCor between full-block and 2TR representations.
}
\label{fig:scaling}
\end{figure}

\begin{table}[htbp]
\centering
\footnotesize
\begin{tabular}{lcccc}
\toprule
Model & Full block & 2 TRs & CKA (Full block$\leftrightarrow$2TR) & dCor (Full block$\leftrightarrow$2TR) \\
\midrule
PIMSM & 0.987$\pm$0.007 & 0.933$\pm$0.016 & 0.812$\pm$0.033 & 0.846$\pm$0.022 \\
Mamba2+drift & 0.987$\pm$0.004 & 0.923$\pm$0.009 & 0.837$\pm$0.021 & 0.854$\pm$0.019 \\
\bottomrule
\end{tabular}
\caption{Drift intervention: full block $\rightarrow$ 2 TR. Values are mean $\pm$ std over three seeds.}
\label{tab:drift_intervention_2tr}
\end{table}

\begin{table}[htbp]
\centering
\footnotesize
\begin{tabular}{lcccc}
\toprule
Model & Full block & 3 TRs & CKA (Full block$\leftrightarrow$3TR) & dCor (Full block$\leftrightarrow$3TR) \\
\midrule
PIMSM & 0.987$\pm$0.007 & 0.944$\pm$0.005 & 0.854$\pm$0.022 & 0.878$\pm$0.020 \\
Mamba2+drift & 0.987$\pm$0.004 & 0.939$\pm$0.011 & 0.854$\pm$0.036 & 0.879$\pm$0.025 \\
\bottomrule
\end{tabular}
\caption{Drift intervention: full block $\rightarrow$ 3 TR. Values are mean $\pm$ std over three seeds.}
\label{tab:drift_intervention_3tr}
\end{table}

\section{Low-Resource Generalization: Additional Details}
\label{sec:appx:data_scarce}

Results are reported in Table~\ref{tab:data_scarce} and Section~\ref{sec:results:data_scarce} of the main text.

\section{WeightWatcher Analysis: Layer-wise Power-Law Exponents}
\label{sec:appx:weightwatcher}

\begin{figure}[htbp]
\centering
\includegraphics[width=0.85\linewidth]{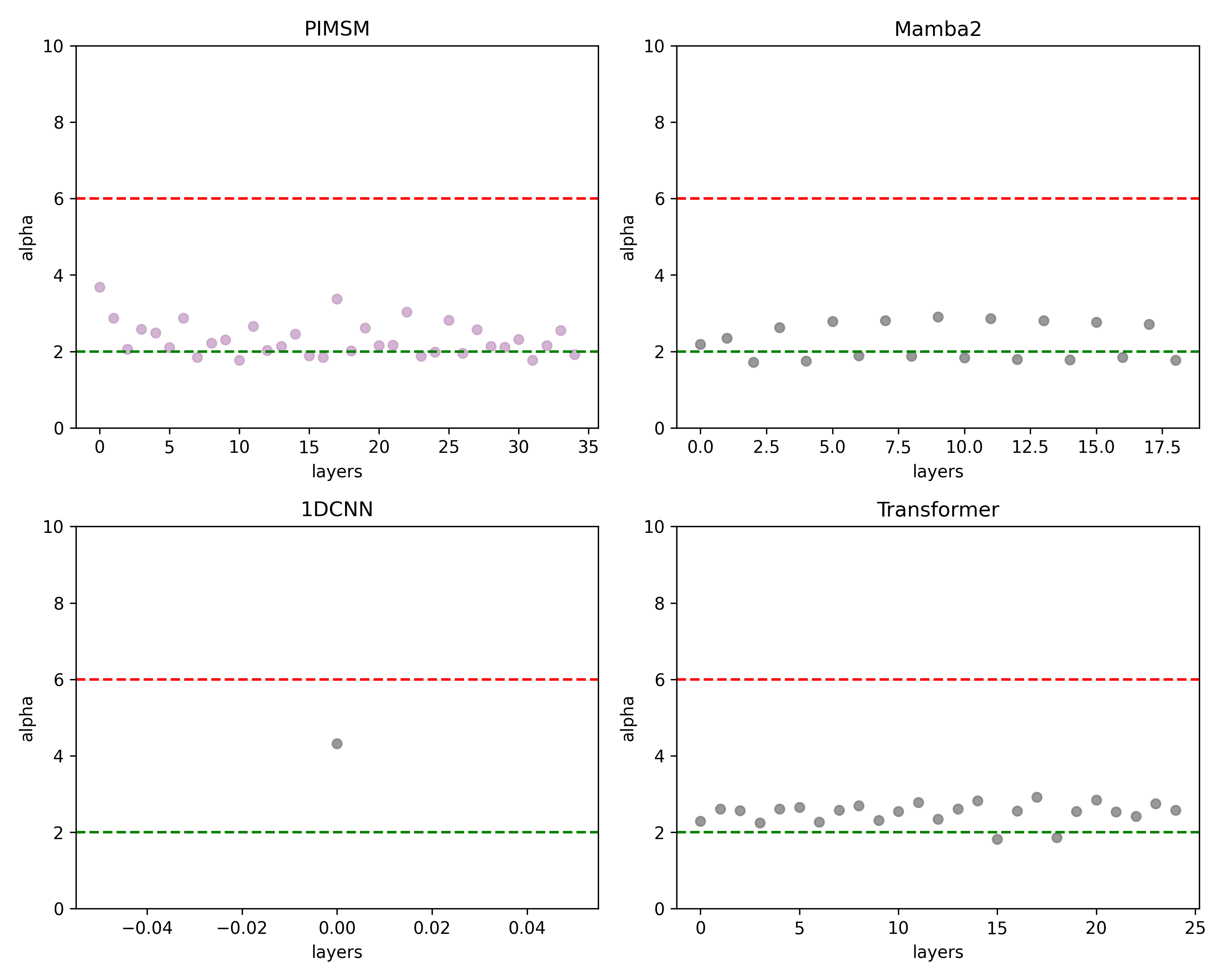}
\caption{
\textbf{WeightWatcher layer-wise $\alpha$ distributions under full-block input (HCP motor task fMRI).}
Each point represents the power-law exponent $\alpha$ of the empirical spectral density of a weight matrix.
The red dashed line ($\alpha = 6$) marks the boundary above which weights are statistically indistinguishable from random matrices;
the green dashed line ($\alpha = 2$) indicates the onset of over-training.
PIMSM exhibits heterogeneous $\alpha$ values (1.77--3.69) spread across a wide range, indicating that different layers operate at distinct feature-complexity regimes.
Mamba2 and Transformer concentrate near $\alpha \approx 2$, suggesting that most layers have reached training saturation.
1D-CNN has only one analyzable layer (remaining layers contain weight matrices too small for reliable spectral fitting).
}
\label{fig:ww_full_tr}
\end{figure}

\begin{figure}[htbp]
\centering
\includegraphics[width=0.85\linewidth]{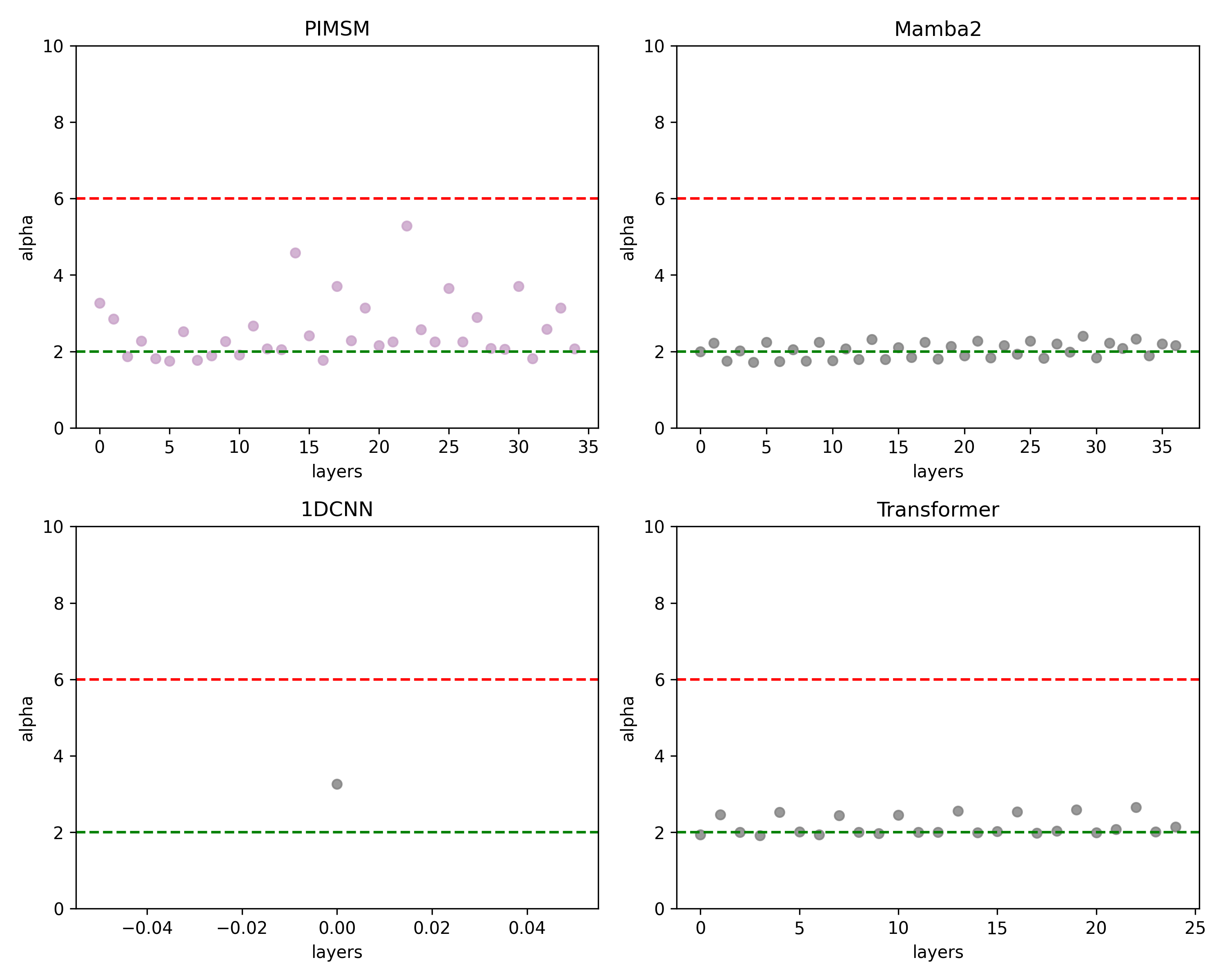}
\caption{
\textbf{WeightWatcher layer-wise $\alpha$ distributions under 2 TR early-window condition (extreme temporal truncation).}
Under the 2 TR condition---an extreme data-scarce scenario in which only two time points are available per trial---PIMSM maintains a wide $\alpha$ spread (1.75--5.29), indicating that layer-wise representational diversity is preserved despite the severely reduced input.
In contrast, Mamba2 concentrates more uniformly near $\alpha \approx 2$ (all 37 analyzable layers saturate simultaneously), suggesting greater representation homogenization when learning signals are scarce.
The Transformer retains a similar distribution to its full-block counterpart but with reduced variance.
}
\label{fig:ww_2tr}
\end{figure}

WeightWatcher~\citep{martin2021predicting} quantifies the implicit self-regularization of each weight matrix by fitting the tail of its empirical spectral density (ESD) to a power law $\rho(\lambda) \propto \lambda^{-\alpha}$.
Matrices with $\alpha \in [2, 6]$ are considered \emph{well-trained}: $\alpha < 2$ indicates over-training (insufficient regularization), and $\alpha > 6$ indicates that the matrix is statistically indistinguishable from a random initialization.
Note that WeightWatcher analyzes 2D matrices only; 4D convolution tensors (as in 1D-CNN) are excluded except for the final linear layer.

\paragraph{Full-trial condition (Figure~\ref{fig:ww_full_tr}).}
Under the standard full-block condition, PIMSM exhibits $\alpha \in [1.77, 3.69]$ with notable layer-to-layer variation.
This heterogeneity reflects the multi-scale inductive bias of the architecture: different PIMSM layers specialize to capture temporal correlations at different scales, resulting in layers with distinct implicit regularization properties.
In contrast, Mamba2 and Transformer accumulate near $\alpha \approx 2.0$, indicating that most layers are in a saturated learning state.
These models may be efficiently encoding in-distribution patterns but leave little representational capacity available for adaptation under shift.

\paragraph{2 TR early-window condition (Figure~\ref{fig:ww_2tr}).}
Under the extreme 2 TR condition, PIMSM's $\alpha$ distribution widens further (1.75--5.29).
Rather than homogenizing under reduced input, the multi-scale architecture preserves layer-wise role differentiation---deeper layers tend toward higher $\alpha$ (less specialized), while early layers remain well-fitted---suggesting that the model gracefully degrades without global collapse.
Mamba2, by contrast, narrows its $\alpha$ distribution toward $\approx 2.0$ with near-zero variance across all 37 analyzable layers, consistent with representation homogenization under information-sparse inputs.
The Transformer shows moderate variance reduction but remains closer to its full-block distribution, perhaps due to the global attention mechanism retaining some diversity.

\paragraph{Interpretation.}
These layer-wise spectral analyses provide weight-level corroboration for the conclusion drawn from CKA and distance correlation measurements:
PIMSM's multi-scale structure actively resists representation collapse when temporal context is severely reduced.
The preserved heterogeneity of $\alpha$ values across layers suggests that the physics-informed multi-scale parameterization may act as an implicit regularizer that discourages collapse to a single dominant feature regime under severe temporal truncation.

\section{Effect of Pretraining: Finetuning and Linear Probing}
\label{sec:appx:pretrain}

\subsection{Finetuning from Resting-State Pretraining}

\begin{table}[htbp]
\centering
\footnotesize
\begin{tabular}{lccccc}
\toprule
Model & From Scratch & Full block & 2 TRs & CKA & dCor \\
\midrule
PIMSM & $0.987 \pm 0.007$ & $0.987 \pm 0.004$ & $\mathbf{0.937 \pm 0.009}$ & $\mathbf{0.867 \pm 0.004}$ & $\mathbf{0.889 \pm 0.009}$ \\
Mamba2 & $0.983 \pm 0.002$ & $0.987 \pm 0.007$ & $0.933 \pm 0.011$ & $0.847 \pm 0.024$ & $0.860 \pm 0.024$ \\
\midrule
Model & From Scratch & Full block & 3 TRs & CKA & dCor \\
\midrule
PIMSM & $0.987 \pm 0.007$ & $0.987 \pm 0.004$ & $\mathbf{0.949 \pm 0.012}$ & $\mathbf{0.878 \pm 0.005}$ & $\mathbf{0.899 \pm 0.002}$ \\
Mamba2 & $0.983 \pm 0.002$ & $0.987 \pm 0.007$ & $0.944 \pm 0.006$ & $0.874 \pm 0.010$ & $0.887 \pm 0.011$ \\
\bottomrule
\end{tabular}
\caption{
Effect of self-supervised pretraining on resting-state fMRI followed by finetuning on HCP motor task fMRI.
Values are mean $\pm$ std over three seeds.
Representation similarity (CKA, dCor) is measured between full-block and early-window inputs.
Pretraining improves shift robustness for both models, with PIMSM retaining a small but consistent advantage in representation stability.
}
\label{tab:pretrain_finetune}
\end{table}

We evaluate whether self-supervised pretraining on resting-state fMRI improves downstream shift robustness when models are subsequently finetuned on motor task fMRI (Table~\ref{tab:pretrain_finetune}).
Pretraining raises shifted-condition accuracy and representation similarity relative to training from scratch for both PIMSM and Mamba2.
Under the 2 TR condition, pretrained PIMSM achieves $0.937 \pm 0.009$ versus $0.933 \pm 0.011$ for pretrained Mamba2; pretrained PIMSM also retains higher CKA ($0.867 \pm 0.004$) and dCor ($0.889 \pm 0.009$).
Under the 3 TR condition, pretrained PIMSM reaches $0.949 \pm 0.012$ versus $0.944 \pm 0.006$ for Mamba2.
These margins are consistent but modest, suggesting that the primary benefit of pretraining lies in improving representation stability under shift rather than raising the absolute performance ceiling.

\subsection{Linear Probing Evaluation}

\begin{table}[htbp]
\centering
\footnotesize
\begin{tabular}{lccccc}
\toprule
Model & From Scratch & Full block & 2 TRs & CKA & dCor \\
\midrule
PIMSM & $0.987 \pm 0.007$ & $0.934 \pm 0.015$ & $\mathbf{0.748 \pm 0.003}$ & $0.444 \pm 0.029$ & $\mathbf{0.469 \pm 0.020}$ \\
Mamba2 & $0.983 \pm 0.002$ & $\mathbf{0.936 \pm 0.015}$ & $0.730 \pm 0.015$ & $\mathbf{0.448 \pm 0.041}$ & $0.462 \pm 0.036$ \\
\midrule
Model & From Scratch & Full block & 3 TRs & CKA & dCor \\
\midrule
PIMSM & $0.987 \pm 0.007$ & $0.934 \pm 0.015$ & $\mathbf{0.791 \pm 0.027}$ & $0.470 \pm 0.032$ & $\mathbf{0.489 \pm 0.026}$ \\
Mamba2 & $0.983 \pm 0.002$ & $\mathbf{0.936 \pm 0.015}$ & $0.767 \pm 0.023$ & $\mathbf{0.475 \pm 0.042}$ & $0.484 \pm 0.035$ \\
\bottomrule
\end{tabular}
\caption{
Linear probing evaluation of pretrained models on HCP motor task fMRI.
Values are mean $\pm$ std over three seeds.
The backbone is frozen and only a linear classifier is trained.
CKA and dCor are measured between full-block and early-window representations of the frozen backbone.
Both models achieve comparable full-block accuracy ($\approx 0.93$), but PIMSM maintains slightly higher shifted-condition accuracy under 2 TR and 3 TR conditions.
}
\label{tab:linear_probe}
\end{table}

Under strict linear probing, both PIMSM and Mamba2 achieve similar full-block accuracy ($\approx 0.93$).
Shifted-condition accuracy drops substantially for both models relative to the finetuning setting (Table~\ref{tab:pretrain_finetune}), reflecting the difficulty of adapting frozen representations to temporal-resolution shift via a linear readout alone.
PIMSM maintains a consistent advantage: $0.748 \pm 0.003$ vs.\ $0.730 \pm 0.015$ at 2 TR, and $0.791 \pm 0.027$ vs.\ $0.767 \pm 0.023$ at 3 TR.
CKA values under the frozen setting are substantially lower than under finetuning, suggesting that the full representational benefit of physics-informed pretraining is realized during downstream adaptation rather than through passively linearly separable representations.
These results are consistent with the view that multi-scale temporal modeling improves the plasticity of learned features during finetuning rather than directly encoding task geometry.

\section{Piecewise Power-Law Spectra Across Datasets}
\label{sec:appx:psd_multifractal}

\begin{figure}[htbp]
\centering
\begin{minipage}[b]{0.49\linewidth}
  \centering
  \includegraphics[width=\linewidth]{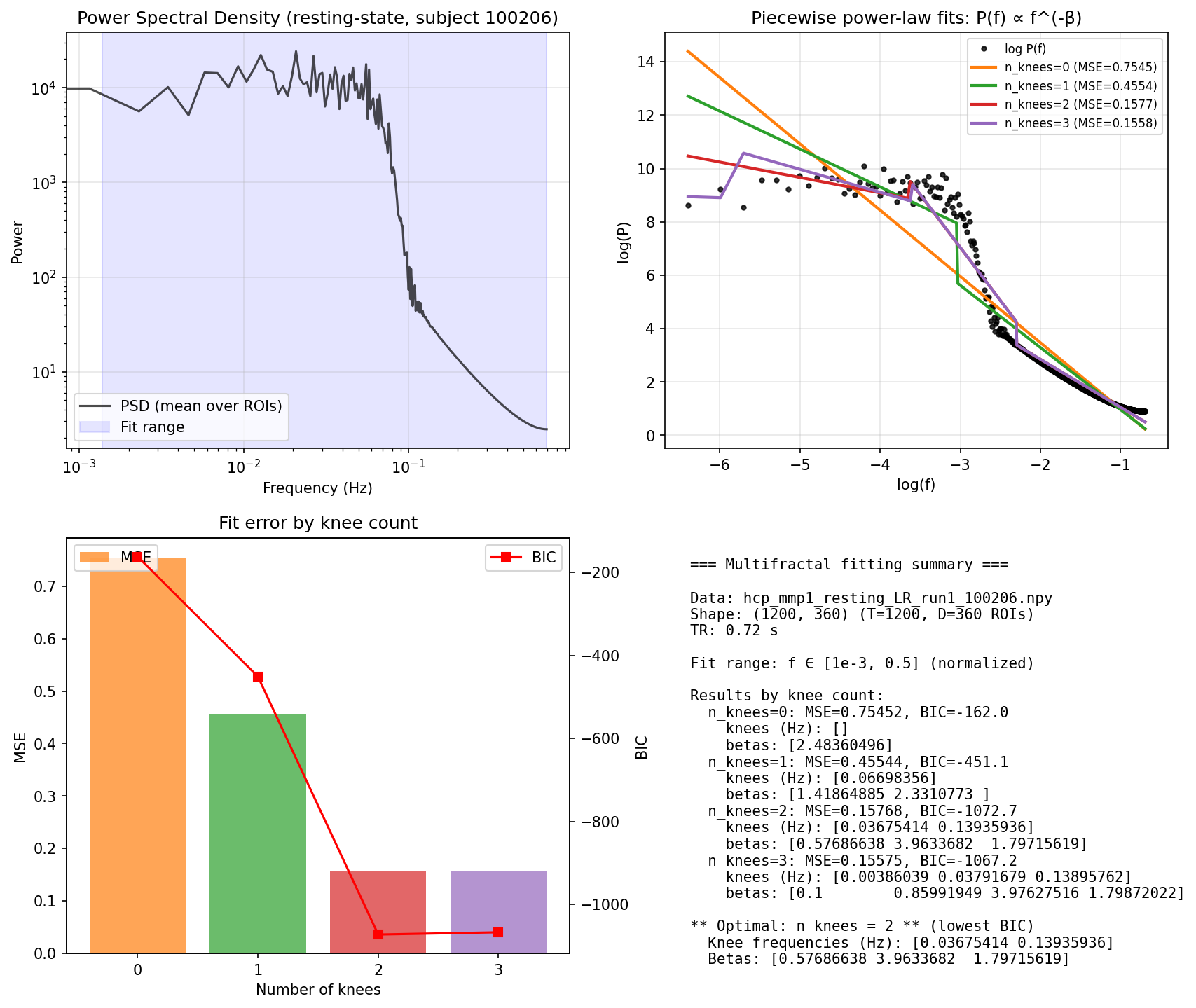}
\end{minipage}
\hfill
\begin{minipage}[b]{0.49\linewidth}
  \centering
  \includegraphics[width=\linewidth]{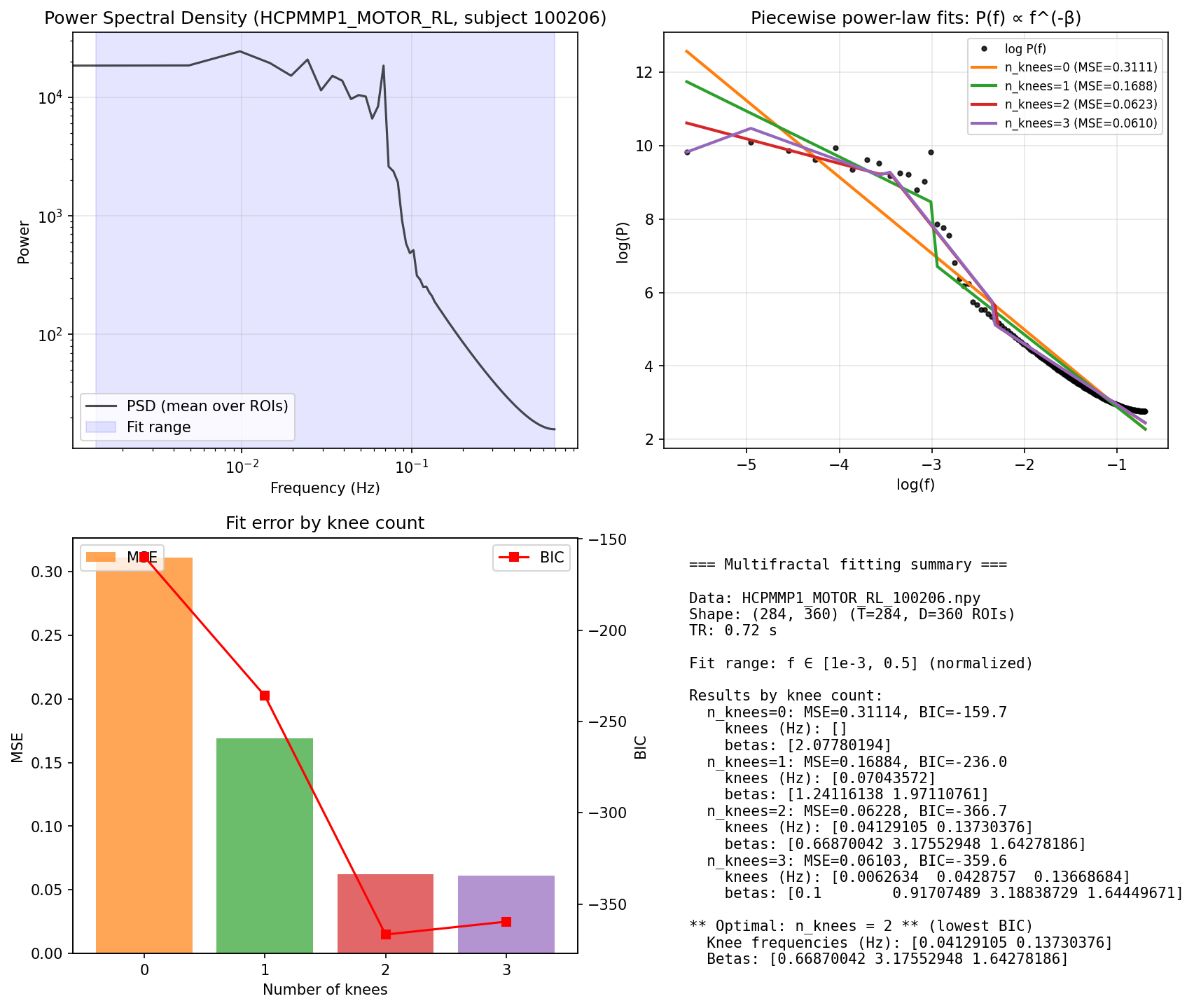}
\end{minipage}

\vspace{0.5em}

\begin{minipage}[b]{0.58\linewidth}
  \centering
  \includegraphics[width=\linewidth]{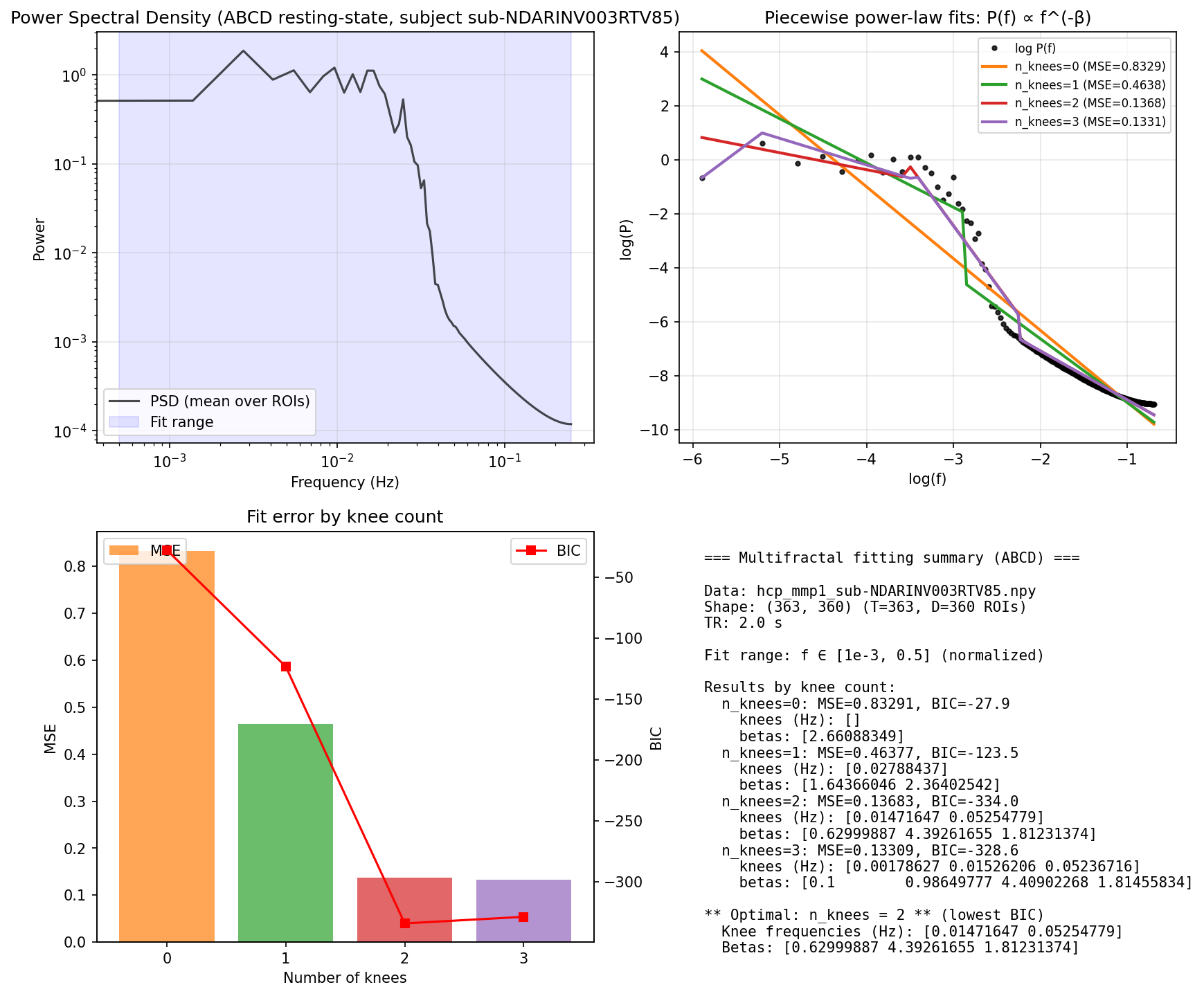}
\end{minipage}

\caption{
\textbf{Piecewise power-law structure of empirical PSD across neural datasets.}
\textbf{(top left)} HCP resting-state fMRI (TR$=0.72$\,s, 360 cortical ROIs). The BOLD PSD exhibits distinct scaling regimes separated by knee frequencies $(f_1, f_2)$, consistent with the piecewise power-law model assumed by SpectralHyperNet.
\textbf{(top right)} HCP motor task fMRI (TR$=0.72$\,s, 360 ROIs). Task-evoked BOLD signals display a similar piecewise structure, indicating that the multi-scale power-law structure is not restricted to resting-state dynamics.
\textbf{(bottom)} ABCD resting-state fMRI. The piecewise power-law structure is preserved across a large developmental cohort with different acquisition parameters.
In each panel, the fitted piecewise model (dashed lines) accurately captures the observed spectral slope changes at the estimated knee frequencies, supporting the core spectral modeling assumption of PIMSM.
}
\label{fig:appx:psd_neural}
\end{figure}

\begin{figure}[htbp]
\centering
\begin{minipage}[b]{0.49\linewidth}
  \centering
  \includegraphics[width=\linewidth]{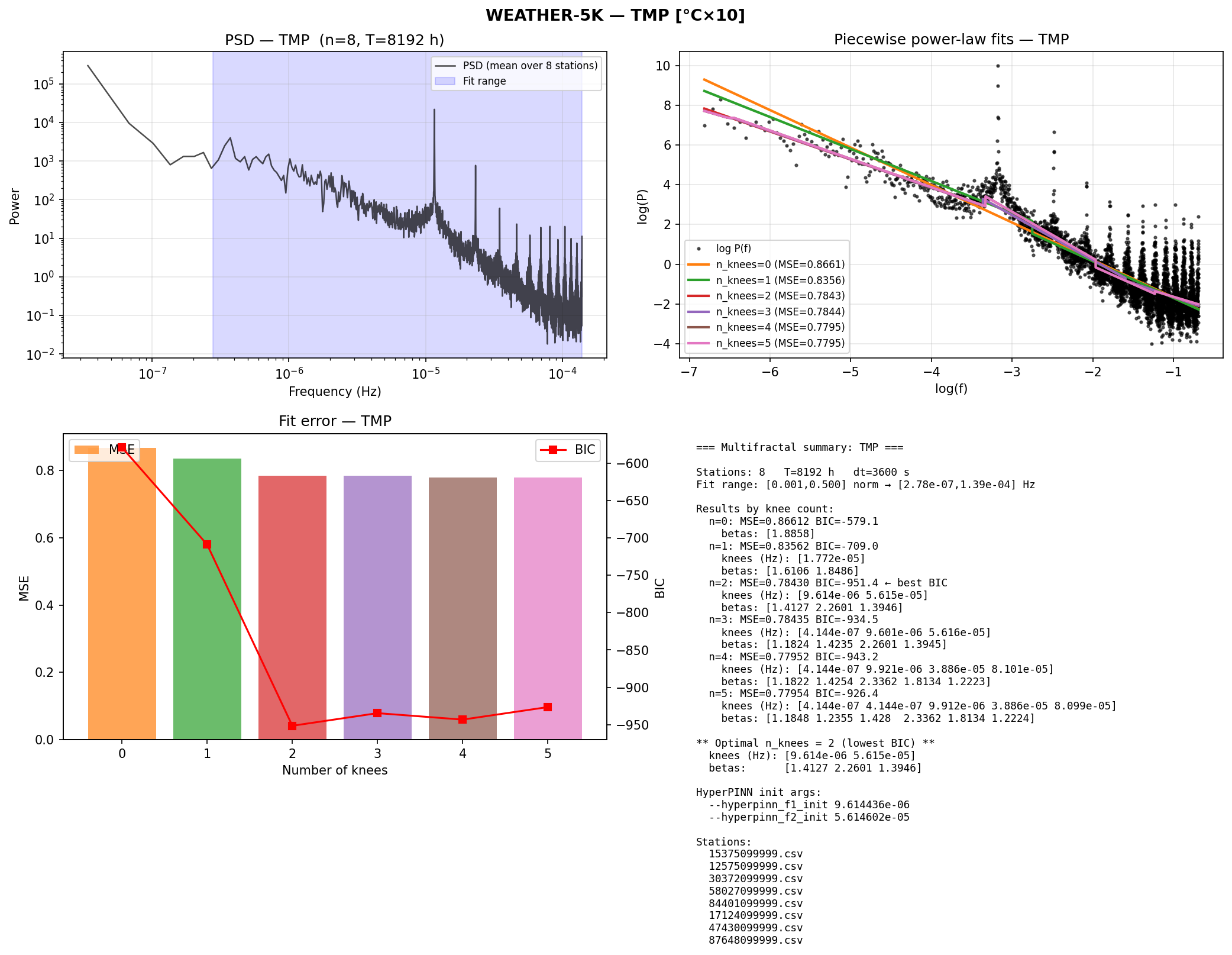}
\end{minipage}
\hfill
\begin{minipage}[b]{0.49\linewidth}
  \centering
  \includegraphics[width=\linewidth]{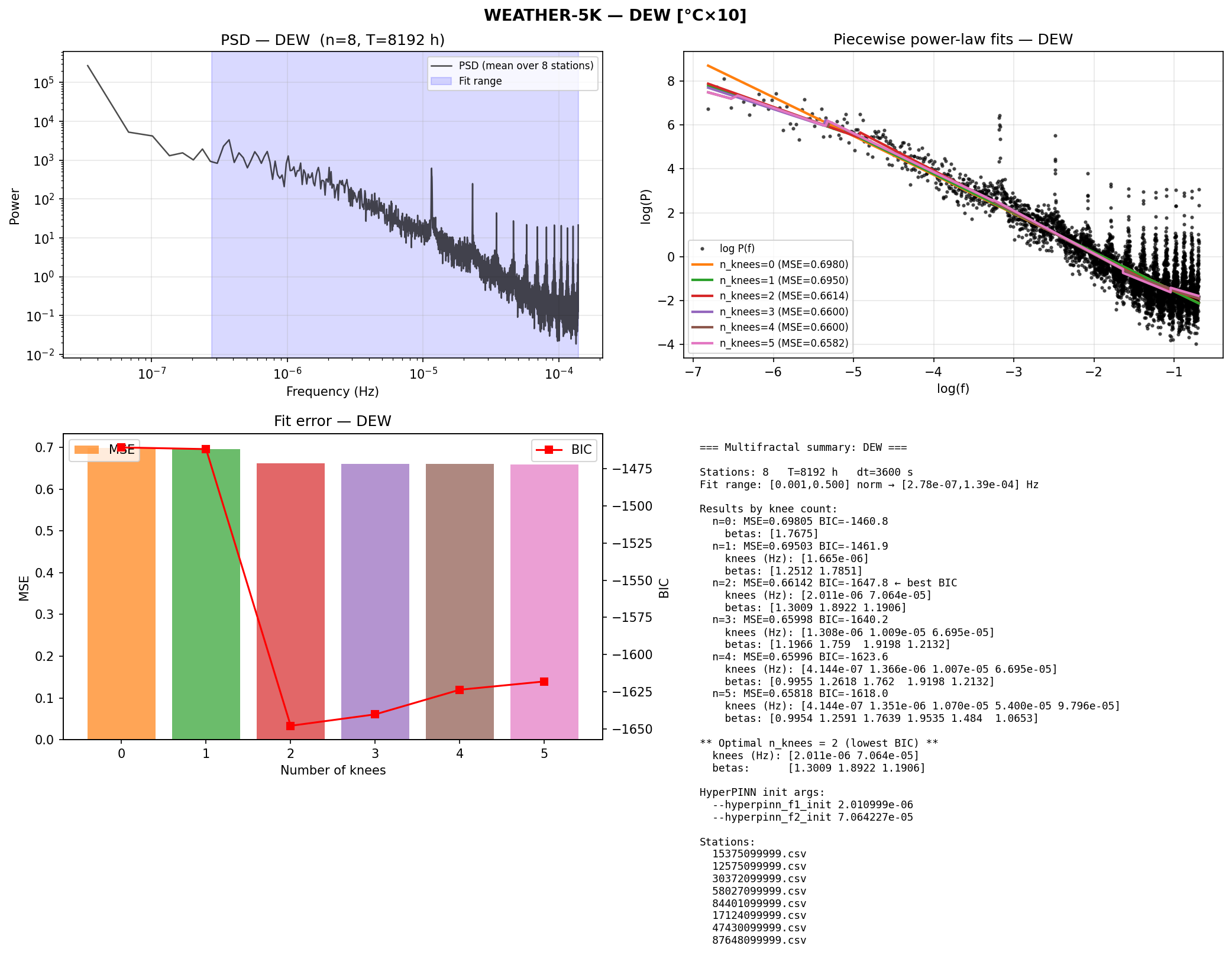}
\end{minipage}

\vspace{0.5em}

\begin{minipage}[b]{0.49\linewidth}
  \centering
  \includegraphics[width=\linewidth]{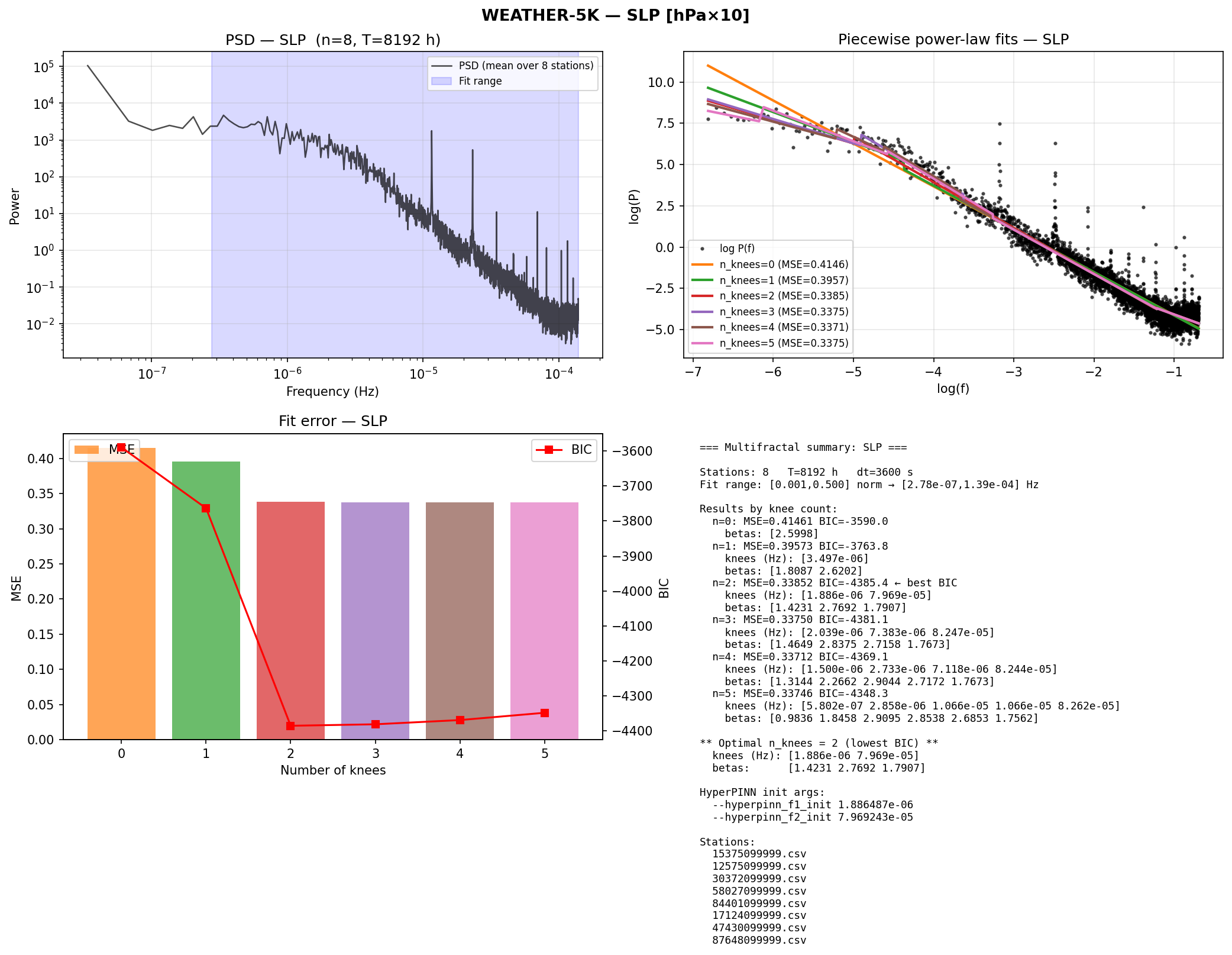}
\end{minipage}
\hfill
\begin{minipage}[b]{0.49\linewidth}
  \centering
  \includegraphics[width=\linewidth]{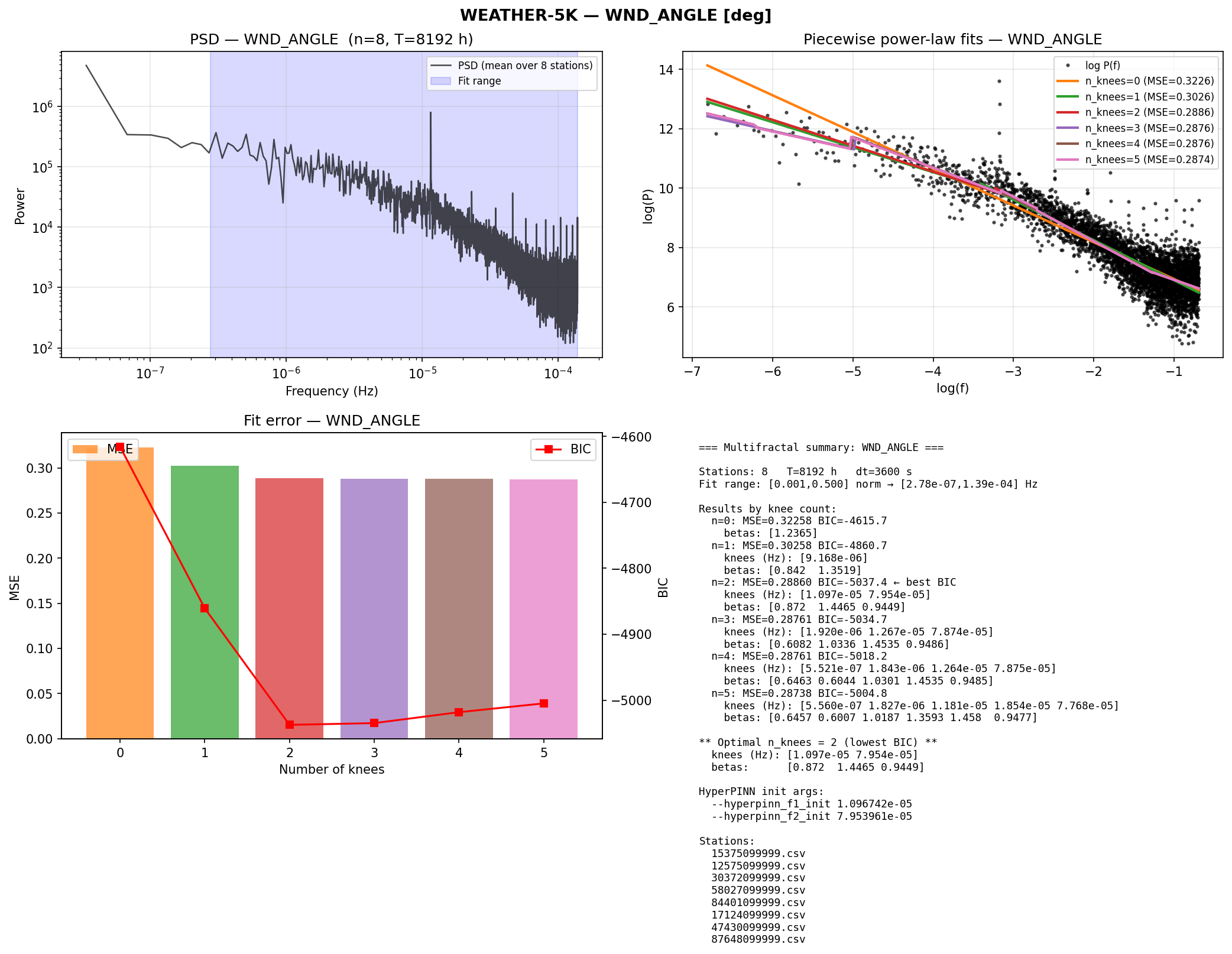}
\end{minipage}

\vspace{0.5em}

\begin{minipage}[b]{0.49\linewidth}
  \centering
  \includegraphics[width=\linewidth]{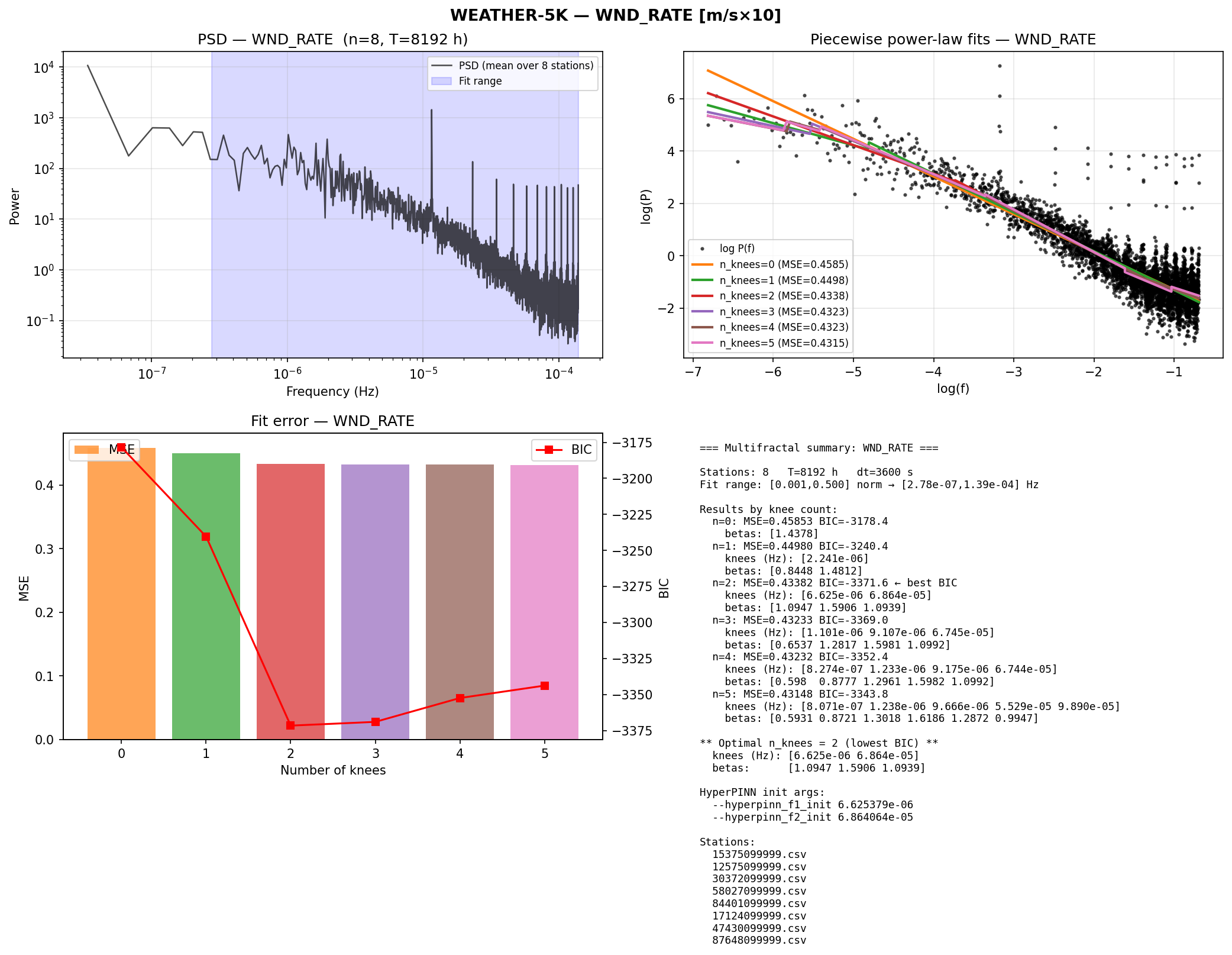}
\end{minipage}

\caption{
\textbf{Piecewise power-law structure of empirical PSD for Weather-5K (per variable).}
Weather-5K meteorological time series: temperature (tmp), dew point (dew), sea-level pressure (slp), wind angle (wnd\_angle), and wind rate (wnd\_rate).
Because PIMSM fits knee frequencies \emph{per variable}, each meteorological channel is analyzed independently rather than aggregated.
The piecewise power-law structure is consistent across all five variables, indicating that the spectral scaling assumption generalises beyond neural data.
In each panel, the fitted piecewise model (dashed lines) accurately captures the observed spectral slope changes at the estimated knee frequencies.
}
\label{fig:appx:psd_weather}
\end{figure}

We present empirical power spectral densities (PSDs) for fMRI cohorts and Weather-5K meteorological variables (Figures~\ref{fig:appx:psd_neural} and~\ref{fig:appx:psd_weather}).
Across all settings, the log-log PSD exhibits a piecewise linear structure with distinct scaling exponents $\beta_k$ separated by knee frequencies $(f_1, f_2)$, consistent with the piecewise power-law model in Eq.~\eqref{eq:piecewise_powerlaw}.
For the Weather-5K dataset, we fit the piecewise model \emph{independently per meteorological variable} (temperature, dew point, sea-level pressure, wind angle, wind rate), since each channel has its own characteristic spectral profile; PIMSM therefore infers a separate set of knee frequencies and scale-specific $\Delta_k$ for each variable.
This empirical regularity motivates the SpectralHyperNet-based scale inference at the heart of PIMSM: rather than imposing a single-exponent scaling law, we fit a configurable piecewise model (default $K=3$ in the main experiments) whose knee frequencies adaptively parameterize the scale-specific discretization parameters $\Delta_k$.

\section{Multi-scale Kernels Reduce Approximation Error to Power-Law Targets}
\label{sec:appx:kernel_approx}

We provide a theoretical result that formalizes the advantage of multi-scale exponential mixtures over single-scale models when approximating power-law decay kernels, which arise naturally from the piecewise power-law spectral structure of neural signals.

\paragraph{Setup.}
Consider the power-law decay kernel
\begin{equation}
g(t) = (1+t)^{-\alpha}, \qquad \alpha > 1,
\label{eq:powerlaw_kernel}
\end{equation}
which corresponds to long-memory temporal dependence.
This arises when the effective temporal kernel of a system integrates contributions from a continuum of timescales.
Let $\mathcal{G}_K$ denote the family of mixtures of $K$ exponentials:
\begin{equation}
\mathcal{G}_K \;=\; \left\{ \tilde{g}(t) = \sum_{k=1}^{K} a_k\, e^{-\lambda_k t} \;:\; a_k \ge 0,\; \sum_{k=1}^K a_k = 1,\; \lambda_k > 0 \right\}.
\label{eq:G_K}
\end{equation}
An SSM whose effective dynamics are dominated by one discretization scale can be viewed as a $K=1$ approximation, while PIMSM with $K$ explicit scales corresponds to $\mathcal{G}_K$.

\begin{proposition}[Multi-scale approximation of power-law kernels]
\label{prop:kernel_approx}
Let $g(t) = (1+t)^{-\alpha}$ with $\alpha > 1$.
There exists a constant $C_\alpha > 0$ depending only on $\alpha$ such that:
\begin{equation}
\inf_{\tilde{g} \in \mathcal{G}_K} \|g - \tilde{g}\|_{L^1[0,\infty)} \;\leq\; C_\alpha \cdot K^{-(\alpha-1)}.
\label{eq:multi_scale_approx_bound}
\end{equation}
In contrast, for $K=1$:
\begin{equation}
\inf_{\tilde{g} \in \mathcal{G}_1} \|g - \tilde{g}\|_{L^1[0,\infty)} \;=\; \Omega(1),
\label{eq:single_scale_floor}
\end{equation}
i.e., the approximation error is bounded away from zero by a positive constant independent of the choice of $\lambda_1$.
\end{proposition}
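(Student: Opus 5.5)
The plan rests on a completely monotone (Laplace-transform) representation of the target kernel. Since $(1+t)^{-\alpha}=\Gamma(\alpha)^{-1}\int_0^\infty s^{\alpha-1}e^{-s}e^{-st}\,ds$, we have $g(t)=\int_0^\infty e^{-st}\,\mu(ds)$, where $\mu=\mathrm{Gamma}(\alpha,1)$ is a probability measure. The normalization $\mu([0,\infty))=g(0)=1$ matches the constraint $\sum_k a_k=1$ in $\mathcal{G}_K$ exactly. Every $\tilde g\in\mathcal{G}_K$ is of the same form with $\mu$ replaced by a $K$-atom probability measure $\nu=\sum_k a_k\delta_{\lambda_k}$. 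So the upper bound reduces to approximating $\mu$ by $K$ atoms in a metric adapted to the $L^1[0,\infty)$ error in $t$. The key elementary identity is $\int_0^\infty|e^{-st}-e^{-s't}|\,dt=|1/s-1/s'|$. Hence, for any coupling $\pi$ of $\mu$ and $\nu$,
\[
\|g-\tilde g\|_{L^1}\le \int |1/s-1/s'|\,\pi(ds,ds') .
\]
Taking the infimum over couplings, the error is at most $W_1(\mu\circ\iota^{-1},\nu\circ\iota^{-1})$ with $\iota(s)=1/s$. The pushforward $\mu\circ\iota^{-1}$ is inverse-gamma, with density $\asymp u^{-\alpha-1}$ at infinity, and it has finite mean precisely because $\alpha>1$.

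\textbf{Step 1 (quantization, covers $1<\alpha\le 2$).} Choose the atoms at the midpoints of $K$ quantile cells of the inverse-gamma law in the variable $u=1/s$, and give each atom the mass of its cell. The one-dimensional $W_1$ quantization error then satisfies $W_1\le \int_0^1|F^{-1}(q)-F^{-1}(q_K(q))|\,dq$, which I would bound cell by cell. The bulk cells contribute $O(K^{-1})$, since $\int p^{1/2}<\infty$ when $\alpha>1$. The last cell, the tail beyond the $(1-1/K)$ quantile $u_K\asymp K^{1/\alpha}$, contributes $\int_{u_K}^\infty u\,p(u)\,du\asymp u_K^{1-\alpha}=K^{-(\alpha-1)/\alpha}$ if handled naively. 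I would refine the tail cells geometrically so that both pieces are $O(K^{-\min(1,\alpha-1)})$. This gives $C_\alpha K^{-(\alpha-1)}$ whenever $\alpha\le 2$.

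\textbf{Step 2 (quadrature, covers all $\alpha>1$).} For $\alpha>2$ quantization alone only gives rate $K^{-1}$, so I would instead use a positive quadrature rule for the Laplace integral. Substitute $s=e^x$, so that $g(t)=\Gamma(\alpha)^{-1}\int_{\mathbb{R}}\exp(\alpha x-e^x(1+t))\,dx$, and apply the trapezoidal rule with step $h$ on a truncated window $[-L_1,L_2]$. The weights are positive. Since the integrand is analytic in a strip and decays doubly-exponentially on the right and like $e^{\alpha x}$ on the left, the standard sinc-quadrature analysis (as in Beylkin--Monzón) gives a pointwise error of the form $(1+t)^{-\alpha}\,e^{-c\sqrt K}$ after choosing $h\asymp K^{-1/2}$. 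The factor $(1+t)^{-\alpha}$ is integrable in $t$, so the $L^1$ error is $O(e^{-c\sqrt K})$. Finally, renormalize the weights to sum to $1$. Their sum equals the quadrature value at $t=0$, which is $1+O(e^{-c\sqrt K})$, so the renormalization changes the $L^1$ error by at most $O(e^{-c\sqrt K})\,\|\tilde g\|_1$, and $\|\tilde g\|_1$ is bounded. Because $e^{-c\sqrt K}\le C_\alpha K^{-(\alpha-1)}$ for every $\alpha$, this proves \eqref{eq:multi_scale_approx_bound} and makes Step 1 a sanity check rather than a necessity.

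\textbf{Step 3 (lower bound for $K=1$).} Here $\tilde g(t)=e^{-\lambda t}$, and $E(\lambda):=\|g-e^{-\lambda\cdot}\|_{L^1}$ is continuous on $(0,\infty)$. As $\lambda\to\infty$, $E(\lambda)\to\|g\|_1=1/(\alpha-1)>0$. As $\lambda\to 0$, $E(\lambda)\ge 1/\lambda-1/(\alpha-1)\to\infty$. Hence the infimum is attained on a compact interval $[\lambda_-,\lambda_+]$. On that interval $E(\lambda)>0$ for every $\lambda$, because $g$ is not an exponential: $\log g$ is not linear. Therefore $\inf_\lambda E(\lambda)=c_\alpha>0$, which is \eqref{eq:single_scale_floor}.

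\textbf{Main obstacle.} The delicate part is Step 2: making the sinc-quadrature error estimate uniform in $t\in[0,\infty)$ with an integrable envelope. For large $t$ the integrand's mass shifts to $x\approx-\log(1+t)$, so the left truncation point $-L_1$ must be chosen so that the discarded left tail, which is $\lesssim e^{-\alpha L_1}$ times a function integrable in $t$, stays small. If that uniformity turns out to be messy, I would first try bounding the error separately on $t\le T$ and on $t>T$, using the crude bound $|g|+|\tilde g|$ on the tail $t>T$ together with the matched first moments $\sum a_k/\lambda_k\approx 1/(\alpha-1)$.
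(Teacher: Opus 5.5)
Your proposal is correct in substance, and its route differs from the paper's. Both arguments start from the Laplace representation and read $\mathcal{G}_K$ as positive $K$-node quadratures of the mixing measure. You identify that measure correctly as $\mathrm{Gamma}(\alpha,1)$, whose unit mass matches $\sum_k a_k=1$; the paper's $\mu\propto\lambda^{\alpha-2}$ is not the right density.

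From there the two proofs diverge. The paper appeals to Gauss--Jacobi quadrature with an asserted $L^2$ error $O(K^{-2(\alpha-1)})$ and converts to $L^1[0,\infty)$ by Cauchy--Schwarz. That conversion is not valid without a weight on an infinite-measure interval, so the sketch does not by itself control the $L^1$ error. Its $K=1$ floor rests on ``exponentials decay faster than power laws,'' which gives positivity for each fixed $\lambda_1$ but no uniformity in $\lambda_1$.

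Your route buys three things:
\begin{itemize}
\item It controls $L^1$ directly, via $\int_0^\infty|e^{-st}-e^{-s't}|\,dt=|1/s-1/s'|$, i.e.\ $\|w e^{-\lambda\cdot}\|_{L^1}=w/\lambda$.
\item It gives the much stronger rate $e^{-c\sqrt{K}}$ from the log-variable trapezoidal rule, which shows the stated $K^{-(\alpha-1)}$ bound is far from sharp.
\item Your Step 3 (continuity of $E$, endpoint limits $1/(\alpha-1)$ and $+\infty$, and $g$ not being an exponential) is a complete proof of the floor.
\end{itemize}
What the paper's approach would offer, if completed, is nodes tied to a classical Gaussian rule together with an explicit polynomial rate.

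One statement in Step 2 is false as written and has to be replaced, not just made uniform. No finite exponential sum satisfies $|g-\tilde g|\lesssim(1+t)^{-\alpha}e^{-c\sqrt K}$ for all $t\ge 0$: $\tilde g$ decays exponentially, so the relative error tends to $1$ as $t\to\infty$. Split the error into discretization and truncation.
\begin{itemize}
\item For the bi-infinite trapezoidal sum the relative bound genuinely is uniform. With $f_t(x)=e^{\alpha x-e^x(1+t)}$ and $0<d<\pi/2$, one has $\int_{\mathbb{R}}|f_t(x\pm id)|\,dx=\Gamma(\alpha)((1+t)\cos d)^{-\alpha}$. The strip estimate then bounds the error by $2(\cos d)^{-\alpha}(e^{2\pi d/h}-1)^{-1}g(t)$.
\item Bound the truncation directly in $L^1$. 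The discarded node $x_n=nh$ costs $w_n/\lambda_n=h\,e^{(\alpha-1)nh-e^{nh}}/\Gamma(\alpha)$. The left tail therefore costs $\asymp e^{-(\alpha-1)L_1}$, not $e^{-\alpha L_1}$ times a fixed integrable envelope, because the discarded sum behaves like $\min(e^{-\alpha L_1},(1+t)^{-\alpha})$. This is exactly where $\alpha>1$ enters.
\end{itemize}
Choosing $L_1\asymp 1/((\alpha-1)h)$ and $L_2\asymp\log(1/h)$ gives $K\asymp 1/((\alpha-1)h^2)$ and $L^1$ error $e^{-c\sqrt{(\alpha-1)K}}$. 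Your renormalization step and the absorption of small $K$ into $C_\alpha$ then go through unchanged.

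Step 1 as described also does not deliver what you claim. With equal-mass quantile cells the bulk cost is $\asymp u_K/K\asymp K^{1/\alpha-1}$, not $O(K^{-1})$. The $\int p^{1/2}<\infty$ heuristic corresponds instead to a point density proportional to $p^{1/2}$. With that allocation your coupling bound gives $O(K^{-1})$ for every $\alpha>1$, which already settles $1<\alpha\le 2$.
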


\paragraph{Proof sketch.}
The key insight is the Stieltjes (Laplace) representation of $g$:
\begin{equation}
g(t) = (1+t)^{-\alpha} = \frac{1}{\Gamma(\alpha)} \int_0^\infty \lambda^{\alpha-1} e^{-\lambda} e^{-\lambda t}\, d\lambda \;\propto\; \int_0^\infty e^{-\lambda t}\, \mu(\lambda)\, d\lambda,
\label{eq:stieltjes}
\end{equation}
where the mixing measure is $\mu(\lambda) \propto \lambda^{\alpha-2}$ (up to normalization) for $\lambda \in (0, \infty)$.
Thus $g$ is a continuous mixture of exponentials, and the best $K$-exponential approximation corresponds to a $K$-point quadrature rule for the measure $\mu$.

\emph{Upper bound (Eq.~\eqref{eq:multi_scale_approx_bound}).}
Choose $\lambda_1 < \lambda_2 < \cdots < \lambda_K$ as the nodes of a $K$-point Gauss--Jacobi quadrature rule adapted to $\mu$, with corresponding weights $a_k$.
By standard quadrature error theory, the $L^2$ error in approximating $\int_0^\infty e^{-\lambda t} \mu(\lambda)\, d\lambda$ decays as $O(K^{-2(\alpha-1)})$ for smooth $\mu$ with polynomial growth $\mu(\lambda) \propto \lambda^{\alpha-2}$.
Applying Cauchy--Schwarz to convert from $L^2$ to $L^1[0,\infty)$ yields Eq.~\eqref{eq:multi_scale_approx_bound}.

\emph{Lower bound (Eq.~\eqref{eq:single_scale_floor}).}
A single exponential $a_1 e^{-\lambda_1 t}$ cannot reproduce the algebraic tail of $(1+t)^{-\alpha}$: for any $\lambda_1 > 0$, the $L^1$ difference $\int_0^\infty |(1+t)^{-\alpha} - a_1 e^{-\lambda_1 t}|\, dt$ remains bounded away from zero because exponentials decay faster than any power law.
This follows from the fact that $e^{-\lambda t} = o((1+t)^{-\alpha})$ as $t \to \infty$ for all $\lambda > 0$ and $\alpha > 1$, giving an $\Omega(1)$ tail discrepancy regardless of the choice of $a_1, \lambda_1$.

\paragraph{Implication for PIMSM.}
Proposition~\ref{prop:kernel_approx} establishes that the approximation error of $\mathcal{G}_K$ to a power-law kernel decreases at rate $K^{-(\alpha-1)}$, while a single exponential ($K=1$) incurs a nonvanishing floor error.
For $\alpha = 2$ (canonical $1/f$ scaling), $K = 3$ reduces the upper bound by a factor of $3^{(\alpha-1)} = 3$ relative to $K=1$.
Combined with Lemma~\ref{lem:kernel_mismatch} (Section~\ref{sec:theory:kernelbound}), this suggests why a 3-scale SSM can reduce representation error relative to a one-scale approximation when the target kernel is well modeled by a power-law decay.

\paragraph{Physics-informed quadrature.}
Proposition~\ref{prop:kernel_approx} shows that \emph{any} $K$-point quadrature achieves the $K^{-(\alpha-1)}$ rate.
PIMSM's SpectralHyperNet provides a structured quadrature: the knee frequencies $(f_1, f_2)$ inferred from the empirical PSD determine the quadrature nodes $\{\lambda_k\}$ in a data-adaptive manner.
Compared to randomly initialized or freely learnable scale parameters (which may cluster in suboptimal regions of the $\lambda$ axis), the physics-informed initialization places the quadrature nodes near the spectral knees of the target distribution, yielding a better-conditioned approximation.
This provides a theoretical mechanism underlying the empirical advantage of SpectralHyperNet-derived $\Delta$ parameters over random or learnable alternatives in the TR-shift experiments (Appendix~\ref{sec:appx:hcp_tr_shift}).

\section{Delta Mapping Ablation: Mixing Weight and Normalization Mode}
\label{sec:appx:delta_ablation}

The per-band frequency-to-delta mapping combines global band position $p_k$ and within-band normalized frequency $t_k$ via $m_k = p_k(1-w) + w\,t_k$. To assess sensitivity to this design choice, we ablate (i) the mixing weight $w \in \{0, 0.3, 0.5, 1.0\}$ and (ii) normalization mode (global vs.\ per-band). At $w=0$, $m_k = p_k$ (band position only); at $w=1$, $m_k = t_k$ (within-band variation only); the default $w=0.3$ balances both. Global normalization maps $f_{c,k}$ directly onto $[\Delta_{\min}, \Delta_{\max}]$ without per-band boundaries. These ablations are designed to verify that the core principle---higher $f \to$ larger $\Delta$---drives the benefit, rather than the specific mixing formula.

\section{Dataset and Preprocessing Details}
\label{sec:appx:data}

\paragraph{HCP-1200 motor task fMRI.}
Time series were extracted from 360 cortical regions (HCPMMP1 atlas~\citep{glasser2016multi}), yielding inputs $\mathbf{x}_{1:T}\in\mathbb{R}^{T\times360}$ (TR$=0.72$\,s).
Event-related segments for five motor conditions (LH, RH, LF, RF, T) were extracted with a maximum block length of 17 TRs; each ROI time series was z-score normalized per run.
\textbf{Temporal-context shift:} early-window models are trained and evaluated with SSM backbone input restricted to the first 3 or 2 TRs of each motor block, while SpectralHyperNet receives the full 284-TR motor run for spectral calibration. This tests early-window decoding with full-run spectral scale estimates, not a change in scanner TR or sampling rate.
\textbf{Low-resource:} labeled fraction varied in $\{1\%, 5\%, 10\%, 100\%\}$; validation and test sets unchanged.

\paragraph{HCP resting-state fMRI (brain-state shift).}
Axis (iii) follows the modality-shift loader used by the HCP cross-dataset script. Subjects are restricted to participants with both resting-state and motor-task files, then split randomly into train/validation/test subject sets. Train and validation loaders use resting-state fMRI, while the held-out test loader uses HCPMMP1 motor-task fMRI (MOTOR\_RL); all sequences are padded or truncated to 284 TRs and z-score normalized. Models are trained from scratch for sex classification using the Gender metadata label, which is shared across scan conditions, and evaluated on the task-fMRI test split without adaptation.

\paragraph{Weather-5K.}
Large-scale meteorological benchmark~\citep{han2024far} spanning 5,672 global weather stations.
We use hourly station records with five raw meteorological variables: temperature (TMP), dew point (DEW), wind angle, wind rate, and sea-level pressure (SLP). Wind angle is circular, so it is represented as sine and cosine components, giving six model channels: TMP, DEW, wind-sin, wind-cos, wind rate, and SLP. Missing rows are dropped before window construction.

For matched experiments we use a 100-station geo-stratified subset. Stations are sampled from geographic clusters built from latitude, longitude, elevation, and station quality metadata, with a target cluster size of eight stations. The spatial OOD split then partitions stations, not windows: stations are clustered on unit-sphere geographic coordinates, clusters are ranked by geographic isolation, and the most isolated cluster(s) are assigned to the held-out test set. We use this station-disjoint split as the primary Weather-5K evaluation because chronological holdouts share station identity between train and test, while spatial holdout evaluates generalization to unseen station domains. In the current runs, 90\% of stations are used for training/validation and 10\% are held out for spatial OOD testing. Within the train-station set, $1/9$ of windows are used for validation, yielding an approximate 8:1:1 train/validation/test station-window budget while keeping test stations disjoint from training.

Normalization statistics for z-scoring are computed from training inputs only. For train stations, per-station mean and standard deviation are estimated from training windows; held-out spatial OOD stations fall back to the global training statistics because their station-specific statistics are unavailable during training. RevIN is then applied on top of this dataset-level normalization during model training and evaluation, and predictions are mapped back to variable units before reporting MAE.

\section{Formal Problem Setup}
\label{sec:appx:problem_setup}

We observe a multivariate time series $x_{1:T} \in \mathbb{R}^{d}$.
A temporal encoder $\Phi_{\theta}$ maps $x_{1:T}$ to latent states $h_{1:T} = \Phi_{\theta}(x_{1:T})$, $h_t \in \mathbb{R}^m$.
An aggregation $z = \rho(h_{1:T})$ feeds a prediction head $\hat{y} = g_{\psi}(z)$.
Given labeled source data $\mathcal{D}_{e}$, training minimizes
\begin{equation}
\min_{\theta,\psi}\;\mathbb{E}_{(x,y)\sim \mathcal{D}_{e}}\!\left[\ell\!\left(g_{\psi}(\rho(\Phi_{\theta}(x))),\,y\right)\right].
\end{equation}
Representation drift under shift $e \to e'$ is defined in Eq.~\eqref{eq:drift_def} of the main text.
The physics-informed spectral assumption is: the power spectral density of $x$ admits a piecewise scaling form $S_x(f) \approx c_k f^{-\beta_k}$, $f \in [f_{k-1}, f_k]$, $k=1,\dots,K$, motivating scale-specific discretization parameters $\Delta^{(k)}$ tied to the knee frequencies $(f_1,\dots,f_{K-1})$.

\section{Theory Proofs}
\label{sec:appx:proofs}

\subsection{Proof of Lemma~\ref{lem:kernel_mismatch} (Kernel Mismatch Bound)}

\begin{proof}
For any $t$,
\[
|h(t)-\tilde h(t)|
=
\left|\int_{0}^{\infty} (g(\tau)-\tilde g(\tau))\, x(t-\tau)\, d\tau\right|
\le
\int_{0}^{\infty} |g(\tau)-\tilde g(\tau)|\, |x(t-\tau)|\, d\tau
\le
M \int_{0}^{\infty} |g(\tau)-\tilde g(\tau)|\, d\tau.
\]
Taking the supremum over $t$ gives Eq.~\eqref{eq:kernel_mismatch_bound}.
\end{proof}

\subsection{Proposition: Readout Sensitivity to Representation Drift}

\begin{proposition}[Readout sensitivity]
\label{prop:readout_sensitivity}
For any $w$ and representations $(h,h')$,
\begin{equation}
|\hat y' - \hat y| = |w^\top(h'-h)| \le \|w\|_2\,\|h'-h\|_2.
\end{equation}
\end{proposition}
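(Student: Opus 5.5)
The statement is a direct consequence of linearity of the readout followed by the Cauchy--Schwarz inequality, so the plan is short. I would first make the implicit setup explicit: the readout is linear, $\hat y = w^\top h$ and $\hat y' = w^\top h'$, with the same weight vector $w\in\mathbb{R}^m$ applied to the representation before and after the shift $e\to e'$. Any bias term $b$ is shared across conditions, so it cancels. Subtracting and using linearity of the inner product then gives the identity part of the claim, $\hat y' - \hat y = w^\top h' - w^\top h = w^\top(h'-h)$. Taking absolute values yields the equality as stated.

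For the inequality I would invoke Cauchy--Schwarz in $\mathbb{R}^m$ with the pair $(w, h'-h)$:
\[
|w^\top(h'-h)| \le \|w\|_2\,\|h'-h\|_2 .
\]
This holds for arbitrary $w$, $h$, $h'$, so no further hypotheses are needed. Equality occurs exactly when $w$ is parallel to $h'-h$, which shows the constant $\|w\|_2$ cannot be improved.

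There is no real obstacle. The only point needing care is stating the implicit assumption that the readout is linear and unchanged across conditions, since the proposition writes $|\hat y' - \hat y| = |w^\top(h'-h)|$ without defining $\hat y$. As a remark, I would connect this to Lemma~\ref{lem:kernel_mismatch} to justify the chain in Eq.~\eqref{eq:chain_kernel_to_pred}. If $h,h'$ are the pooled representations $z_e(x)$ and $z_{e'}(x)$, obtained by mean pooling of latent states, and each coordinate of the latent state obeys the sup-norm bound $M\|g_{e'}-\tilde g\|_1$, then $\|h'-h\|_2 \le \sqrt{m}\,M\,\|g_{e'}-\tilde g\|_1$. Combining the two bounds gives
\[
|\hat y'-\hat y| \le \sqrt{m}\,\|w\|_2\, M\,\|g_{e'}-\tilde g\|_1 .
\]
The factor $\sqrt{m}$ comes from passing from the coordinatewise $\ell_\infty$ bound to the $\ell_2$ norm.
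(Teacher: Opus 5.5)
Your proof is correct and matches the paper's argument: the identity follows from applying the same linear readout $w$ in both conditions, and the inequality is Cauchy--Schwarz applied to $w$ and $h'-h$. Your extra remark combining this with Lemma~\ref{lem:kernel_mismatch} is also sound, with the $\sqrt{m}$ factor arising because mean pooling preserves a coordinatewise sup-norm bound, which you then convert to $\ell_2$; it makes precise the chain in Eq.~\eqref{eq:chain_kernel_to_pred} that the paper states only informally with $\lesssim$.
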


\begin{proof}
Cauchy--Schwarz inequality applied directly.
\end{proof}

\bibliography{references}


\newpage
\section*{NeurIPS Paper Checklist}

\begin{enumerate}

\item {\bf Claims}
    \item[] Question: Do the main claims made in the abstract and introduction accurately reflect the paper's contributions and scope?
    \item[] Answer: \answerYes{}
    \item[] Justification: The abstract and introduction state the temporal kernel mismatch hypothesis, the PIMSM architecture, and the four evaluation axes. The claims are matched to the theory, methods, and results sections.
    \item[] Guidelines:
    \begin{itemize}
        \item The answer \answerNA{} means that the abstract and introduction do not include the claims made in the paper.
        \item The abstract and/or introduction should clearly state the claims made, including the contributions made in the paper and important assumptions and limitations. A \answerNo{} or \answerNA{} answer to this question will not be perceived well by the reviewers. 
        \item The claims made should match theoretical and experimental results, and reflect how much the results can be expected to generalize to other settings. 
        \item It is fine to include aspirational goals as motivation as long as it is clear that these goals are not attained by the paper. 
    \end{itemize}

\item {\bf Limitations}
    \item[] Question: Does the paper discuss the limitations of the work performed by the authors?
    \item[] Answer: \answerYes{}
    \item[] Justification: Section~\ref{sec:discussion} discusses the scope of the temporal-context truncation experiment, the full-sequence spectral calibration assumption, the reliance on piecewise spectral structure, fixed scale count, known acquisition-unit requirement, SpectralHyperNet overhead, limited statistical testing, concurrent multi-scale SSM/Mamba baselines not included as parameter-matched HCP comparisons, and the Weather-5K spatial OOD evaluation scope.
    \item[] Guidelines:
    \begin{itemize}
        \item The answer \answerNA{} means that the paper has no limitation while the answer \answerNo{} means that the paper has limitations, but those are not discussed in the paper. 
        \item The authors are encouraged to create a separate ``Limitations'' section in their paper.
        \item The paper should point out any strong assumptions and how robust the results are to violations of these assumptions (e.g., independence assumptions, noiseless settings, model well-specification, asymptotic approximations only holding locally). The authors should reflect on how these assumptions might be violated in practice and what the implications would be.
        \item The authors should reflect on the scope of the claims made, e.g., if the approach was only tested on a few datasets or with a few runs. In general, empirical results often depend on implicit assumptions, which should be articulated.
        \item The authors should reflect on the factors that influence the performance of the approach. For example, a facial recognition algorithm may perform poorly when image resolution is low or images are taken in low lighting. Or a speech-to-text system might not be used reliably to provide closed captions for online lectures because it fails to handle technical jargon.
        \item The authors should discuss the computational efficiency of the proposed algorithms and how they scale with dataset size.
        \item If applicable, the authors should discuss possible limitations of their approach to address problems of privacy and fairness.
        \item While the authors might fear that complete honesty about limitations might be used by reviewers as grounds for rejection, a worse outcome might be that reviewers discover limitations that aren't acknowledged in the paper. The authors should use their best judgment and recognize that individual actions in favor of transparency play an important role in developing norms that preserve the integrity of the community. Reviewers will be specifically instructed to not penalize honesty concerning limitations.
    \end{itemize}

\item {\bf Theory assumptions and proofs}
    \item[] Question: For each theoretical result, does the paper provide the full set of assumptions and a complete (and correct) proof?
    \item[] Answer: \answerYes{}
    \item[] Justification: The main theory section states the kernel mismatch assumptions and formal results, and full proofs are provided in Appendix~\ref{sec:appx:proofs}.
    \item[] Guidelines:
    \begin{itemize}
        \item The answer \answerNA{} means that the paper does not include theoretical results. 
        \item All the theorems, formulas, and proofs in the paper should be numbered and cross-referenced.
        \item All assumptions should be clearly stated or referenced in the statement of any theorems.
        \item The proofs can either appear in the main paper or the supplemental material, but if they appear in the supplemental material, the authors are encouraged to provide a short proof sketch to provide intuition. 
        \item Inversely, any informal proof provided in the core of the paper should be complemented by formal proofs provided in appendix or supplemental material.
        \item Theorems and Lemmas that the proof relies upon should be properly referenced. 
    \end{itemize}

    \item {\bf Experimental result reproducibility}
    \item[] Question: Does the paper fully disclose all the information needed to reproduce the main experimental results of the paper to the extent that it affects the main claims and/or conclusions of the paper (regardless of whether the code and data are provided or not)?
    \item[] Answer: \answerYes{}
    \item[] Justification: The methods and appendix describe datasets, preprocessing, model architecture, training settings, baselines, seeds, and evaluation metrics needed to reproduce the main experimental setup.
    \item[] Guidelines:
    \begin{itemize}
        \item The answer \answerNA{} means that the paper does not include experiments.
        \item If the paper includes experiments, a \answerNo{} answer to this question will not be perceived well by the reviewers: Making the paper reproducible is important, regardless of whether the code and data are provided or not.
        \item If the contribution is a dataset and\slash or model, the authors should describe the steps taken to make their results reproducible or verifiable. 
        \item Depending on the contribution, reproducibility can be accomplished in various ways. For example, if the contribution is a novel architecture, describing the architecture fully might suffice, or if the contribution is a specific model and empirical evaluation, it may be necessary to either make it possible for others to replicate the model with the same dataset, or provide access to the model. In general. releasing code and data is often one good way to accomplish this, but reproducibility can also be provided via detailed instructions for how to replicate the results, access to a hosted model (e.g., in the case of a large language model), releasing of a model checkpoint, or other means that are appropriate to the research performed.
        \item While NeurIPS does not require releasing code, the conference does require all submissions to provide some reasonable avenue for reproducibility, which may depend on the nature of the contribution. For example
        \begin{enumerate}
            \item If the contribution is primarily a new algorithm, the paper should make it clear how to reproduce that algorithm.
            \item If the contribution is primarily a new model architecture, the paper should describe the architecture clearly and fully.
            \item If the contribution is a new model (e.g., a large language model), then there should either be a way to access this model for reproducing the results or a way to reproduce the model (e.g., with an open-source dataset or instructions for how to construct the dataset).
            \item We recognize that reproducibility may be tricky in some cases, in which case authors are welcome to describe the particular way they provide for reproducibility. In the case of closed-source models, it may be that access to the model is limited in some way (e.g., to registered users), but it should be possible for other researchers to have some path to reproducing or verifying the results.
        \end{enumerate}
    \end{itemize}

\item {\bf Open access to data and code}
    \item[] Question: Does the paper provide open access to the data and code, with sufficient instructions to faithfully reproduce the main experimental results, as described in supplemental material?
    \item[] Answer: \answerNo{}
    \item[] Justification: Code is not released with the initial submission to preserve the submission workflow, but the authors plan to release code and reproduction instructions upon acceptance. The paper cites the public datasets used in the experiments.
    \item[] Guidelines:
    \begin{itemize}
        \item The answer \answerNA{} means that paper does not include experiments requiring code.
        \item Please see the NeurIPS code and data submission guidelines (\url{https://neurips.cc/public/guides/CodeSubmissionPolicy}) for more details.
        \item While we encourage the release of code and data, we understand that this might not be possible, so \answerNo{} is an acceptable answer. Papers cannot be rejected simply for not including code, unless this is central to the contribution (e.g., for a new open-source benchmark).
        \item The instructions should contain the exact command and environment needed to run to reproduce the results. See the NeurIPS code and data submission guidelines (\url{https://neurips.cc/public/guides/CodeSubmissionPolicy}) for more details.
        \item The authors should provide instructions on data access and preparation, including how to access the raw data, preprocessed data, intermediate data, and generated data, etc.
        \item The authors should provide scripts to reproduce all experimental results for the new proposed method and baselines. If only a subset of experiments are reproducible, they should state which ones are omitted from the script and why.
        \item At submission time, to preserve anonymity, the authors should release anonymized versions (if applicable).
        \item Providing as much information as possible in supplemental material (appended to the paper) is recommended, but including URLs to data and code is permitted.
    \end{itemize}

\item {\bf Experimental setting/details}
    \item[] Question: Does the paper specify all the training and test details (e.g., data splits, hyperparameters, how they were chosen, type of optimizer) necessary to understand the results?
    \item[] Answer: \answerYes{}
    \item[] Justification: Training details, optimizer settings, batch sizes, regularization weights, model dimensions, baselines, and dataset splits are described in the Methods and Appendices~\ref{sec:appx:training_details}--\ref{sec:appx:data}.
    \item[] Guidelines:
    \begin{itemize}
        \item The answer \answerNA{} means that the paper does not include experiments.
        \item The experimental setting should be presented in the core of the paper to a level of detail that is necessary to appreciate the results and make sense of them.
        \item The full details can be provided either with the code, in appendix, or as supplemental material.
    \end{itemize}

\item {\bf Experiment statistical significance}
    \item[] Question: Does the paper report error bars suitably and correctly defined or other appropriate information about the statistical significance of the experiments?
    \item[] Answer: \answerNo{}
    \item[] Justification: HCP results report mean$\pm$standard deviation over three random seeds, and we additionally verified that PIMSM improves over Mamba2 across all three paired seeds for the main 2TR/3TR temporal-context metrics. However, the manuscript does not claim formal statistical significance because paired tests are underpowered at $n=3$; this limitation is stated in Section~\ref{sec:discussion}.
    \item[] Guidelines:
    \begin{itemize}
        \item The answer \answerNA{} means that the paper does not include experiments.
        \item The authors should answer \answerYes{} if the results are accompanied by error bars, confidence intervals, or statistical significance tests, at least for the experiments that support the main claims of the paper.
        \item The factors of variability that the error bars are capturing should be clearly stated (for example, train/test split, initialization, random drawing of some parameter, or overall run with given experimental conditions).
        \item The method for calculating the error bars should be explained (closed form formula, call to a library function, bootstrap, etc.)
        \item The assumptions made should be given (e.g., Normally distributed errors).
        \item It should be clear whether the error bar is the standard deviation or the standard error of the mean.
        \item It is OK to report 1-sigma error bars, but one should state it. The authors should preferably report a 2-sigma error bar than state that they have a 96\% CI, if the hypothesis of Normality of errors is not verified.
        \item For asymmetric distributions, the authors should be careful not to show in tables or figures symmetric error bars that would yield results that are out of range (e.g., negative error rates).
        \item If error bars are reported in tables or plots, the authors should explain in the text how they were calculated and reference the corresponding figures or tables in the text.
    \end{itemize}

\item {\bf Experiments compute resources}
    \item[] Question: For each experiment, does the paper provide sufficient information on the computer resources (type of compute workers, memory, time of execution) needed to reproduce the experiments?
    \item[] Answer: \answerNo{}
    \item[] Justification: Appendix~\ref{sec:appx:training_details} reports the GPU configuration for the main experiments: HCP fMRI runs use a single NVIDIA RTX 3090 GPU (24GB), and Weather-5K runs use distributed data parallelism on four 80GB NVIDIA H100 GPUs. The manuscript still does not provide complete wall-clock time or total GPU-hour accounting, so we conservatively answer No.
    \item[] Guidelines:
    \begin{itemize}
        \item The answer \answerNA{} means that the paper does not include experiments.
        \item The paper should indicate the type of compute workers CPU or GPU, internal cluster, or cloud provider, including relevant memory and storage.
        \item The paper should provide the amount of compute required for each of the individual experimental runs as well as estimate the total compute. 
        \item The paper should disclose whether the full research project required more compute than the experiments reported in the paper (e.g., preliminary or failed experiments that didn't make it into the paper). 
    \end{itemize}
    
\item {\bf Code of ethics}
    \item[] Question: Does the research conducted in the paper conform, in every respect, with the NeurIPS Code of Ethics \url{https://neurips.cc/public/EthicsGuidelines}?
    \item[] Answer: \answerYes{}
    \item[] Justification: The work uses established public scientific datasets and standard model evaluation procedures, and no component appears to require deviation from the NeurIPS Code of Ethics.
    \item[] Guidelines:
    \begin{itemize}
        \item The answer \answerNA{} means that the authors have not reviewed the NeurIPS Code of Ethics.
        \item If the authors answer \answerNo, they should explain the special circumstances that require a deviation from the Code of Ethics.
        \item The authors should make sure to preserve anonymity (e.g., if there is a special consideration due to laws or regulations in their jurisdiction).
    \end{itemize}

\item {\bf Broader impacts}
    \item[] Question: Does the paper discuss both potential positive societal impacts and negative societal impacts of the work performed?
    \item[] Answer: \answerNo{}
    \item[] Justification: The manuscript does not currently include a dedicated broader impacts discussion. The work is methodological and evaluated on public scientific datasets, but possible deployment impacts are not explicitly discussed.
    \item[] Guidelines:
    \begin{itemize}
        \item The answer \answerNA{} means that there is no societal impact of the work performed.
        \item If the authors answer \answerNA{} or \answerNo, they should explain why their work has no societal impact or why the paper does not address societal impact.
        \item Examples of negative societal impacts include potential malicious or unintended uses (e.g., disinformation, generating fake profiles, surveillance), fairness considerations (e.g., deployment of technologies that could make decisions that unfairly impact specific groups), privacy considerations, and security considerations.
        \item The conference expects that many papers will be foundational research and not tied to particular applications, let alone deployments. However, if there is a direct path to any negative applications, the authors should point it out. For example, it is legitimate to point out that an improvement in the quality of generative models could be used to generate Deepfakes for disinformation. On the other hand, it is not needed to point out that a generic algorithm for optimizing neural networks could enable people to train models that generate Deepfakes faster.
        \item The authors should consider possible harms that could arise when the technology is being used as intended and functioning correctly, harms that could arise when the technology is being used as intended but gives incorrect results, and harms following from (intentional or unintentional) misuse of the technology.
        \item If there are negative societal impacts, the authors could also discuss possible mitigation strategies (e.g., gated release of models, providing defenses in addition to attacks, mechanisms for monitoring misuse, mechanisms to monitor how a system learns from feedback over time, improving the efficiency and accessibility of ML).
    \end{itemize}
    
\item {\bf Safeguards}
    \item[] Question: Does the paper describe safeguards that have been put in place for responsible release of data or models that have a high risk for misuse (e.g., pre-trained language models, image generators, or scraped datasets)?
    \item[] Answer: \answerNA{}
    \item[] Justification: The paper does not release a high-risk model or dataset, and the proposed architecture is not a generative model or scraped-data resource with an obvious misuse pathway.
    \item[] Guidelines:
    \begin{itemize}
        \item The answer \answerNA{} means that the paper poses no such risks.
        \item Released models that have a high risk for misuse or dual-use should be released with necessary safeguards to allow for controlled use of the model, for example by requiring that users adhere to usage guidelines or restrictions to access the model or implementing safety filters. 
        \item Datasets that have been scraped from the Internet could pose safety risks. The authors should describe how they avoided releasing unsafe images.
        \item We recognize that providing effective safeguards is challenging, and many papers do not require this, but we encourage authors to take this into account and make a best faith effort.
    \end{itemize}

\item {\bf Licenses for existing assets}
    \item[] Question: Are the creators or original owners of assets (e.g., code, data, models), used in the paper, properly credited and are the license and terms of use explicitly mentioned and properly respected?
    \item[] Answer: \answerNo{}
    \item[] Justification: The manuscript cites the original datasets and related model papers, but it does not yet explicitly enumerate all dataset licenses and terms of use.
    \item[] Guidelines:
    \begin{itemize}
        \item The answer \answerNA{} means that the paper does not use existing assets.
        \item The authors should cite the original paper that produced the code package or dataset.
        \item The authors should state which version of the asset is used and, if possible, include a URL.
        \item The name of the license (e.g., CC-BY 4.0) should be included for each asset.
        \item For scraped data from a particular source (e.g., website), the copyright and terms of service of that source should be provided.
        \item If assets are released, the license, copyright information, and terms of use in the package should be provided. For popular datasets, \url{paperswithcode.com/datasets} has curated licenses for some datasets. Their licensing guide can help determine the license of a dataset.
        \item For existing datasets that are re-packaged, both the original license and the license of the derived asset (if it has changed) should be provided.
        \item If this information is not available online, the authors are encouraged to reach out to the asset's creators.
    \end{itemize}

\item {\bf New assets}
    \item[] Question: Are new assets introduced in the paper well documented and is the documentation provided alongside the assets?
    \item[] Answer: \answerNA{}
    \item[] Justification: The paper introduces a model architecture and experimental results, but it does not release a new dataset or other standalone asset in the current submission.
    \item[] Guidelines:
    \begin{itemize}
        \item The answer \answerNA{} means that the paper does not release new assets.
        \item Researchers should communicate the details of the dataset\slash code\slash model as part of their submissions via structured templates. This includes details about training, license, limitations, etc. 
        \item The paper should discuss whether and how consent was obtained from people whose asset is used.
        \item At submission time, remember to anonymize your assets (if applicable). You can either create an anonymized URL or include an anonymized zip file.
    \end{itemize}

\item {\bf Crowdsourcing and research with human subjects}
    \item[] Question: For crowdsourcing experiments and research with human subjects, does the paper include the full text of instructions given to participants and screenshots, if applicable, as well as details about compensation (if any)? 
    \item[] Answer: \answerNA{}
    \item[] Justification: The work does not involve new crowdsourcing experiments or direct interaction with human participants; it analyzes existing public neuroimaging datasets.
    \item[] Guidelines:
    \begin{itemize}
        \item The answer \answerNA{} means that the paper does not involve crowdsourcing nor research with human subjects.
        \item Including this information in the supplemental material is fine, but if the main contribution of the paper involves human subjects, then as much detail as possible should be included in the main paper. 
        \item According to the NeurIPS Code of Ethics, workers involved in data collection, curation, or other labor should be paid at least the minimum wage in the country of the data collector. 
    \end{itemize}

\item {\bf Institutional review board (IRB) approvals or equivalent for research with human subjects}
    \item[] Question: Does the paper describe potential risks incurred by study participants, whether such risks were disclosed to the subjects, and whether Institutional Review Board (IRB) approvals (or an equivalent approval/review based on the requirements of your country or institution) were obtained?
    \item[] Answer: \answerNA{}
    \item[] Justification: The study uses existing public de-identified datasets and does not collect new human-subject data. Dataset provenance is described in the data and preprocessing sections.
    \item[] Guidelines:
    \begin{itemize}
        \item The answer \answerNA{} means that the paper does not involve crowdsourcing nor research with human subjects.
        \item Depending on the country in which research is conducted, IRB approval (or equivalent) may be required for any human subjects research. If you obtained IRB approval, you should clearly state this in the paper. 
        \item We recognize that the procedures for this may vary significantly between institutions and locations, and we expect authors to adhere to the NeurIPS Code of Ethics and the guidelines for their institution. 
        \item For initial submissions, do not include any information that would break anonymity (if applicable), such as the institution conducting the review.
    \end{itemize}

\item {\bf Declaration of LLM usage}
    \item[] Question: Does the paper describe the usage of LLMs if it is an important, original, or non-standard component of the core methods in this research? Note that if the LLM is used only for writing, editing, or formatting purposes and does \emph{not} impact the core methodology, scientific rigor, or originality of the research, declaration is not required.
    \item[] Answer: \answerNA{}
    \item[] Justification: LLMs are not used as an important, original, or non-standard component of the proposed method or experiments.
    \item[] Guidelines:
    \begin{itemize}
        \item The answer \answerNA{} means that the core method development in this research does not involve LLMs as any important, original, or non-standard components.
        \item Please refer to our LLM policy in the NeurIPS handbook for what should or should not be described.
    \end{itemize}

\end{enumerate}

\end{document}